\newtheorem{thm}{Theorem}
\newtheorem{lemma}{Lemma}
\DeclareMathOperator{\sech}{sech}
\title{\emph{LogGENE}:\\ A smooth alternative to check loss for \\ Deep Healthcare Inference Tasks}
\author{
  Aryaman Jeendgar\\
  BITS Pilani, Hyderabad Campus\\
  \texttt{jeendgararyaman@gmail.com}\\
\And
Tanmay Devale\\
BITS Pilani, Goa Campus\\
\texttt{f20190066@goa.bits-pilani.ac.in}\\
\And
Soma S Dhavala\\
ML Square\\
\texttt{soma.dhavala@gmail.com}\\
\And
Snehanshu Saha\\
BITS Pilani, Goa Campus\\
\texttt{snehanshus@goa.bits-pilani.ac.in}
}
\begin{document}
\maketitle

\begin{abstract}
Mining large datasets and obtaining calibrated predictions from them is of immediate relevance and utility in reliable deep learning.  In our work, we develop methods for Deep Neural Network based inferences in such datasets like the gene expression.  However, unlike typical Deep learning methods, our inferential technique, while achieving state-of-the-art performance in terms of accuracy, can also provide explanations, and report uncertainty estimates. We adopt the quantile regression framework to predict full conditional quantiles for a given set of housekeeping gene expressions. In addition to being useful in providing rich interpretations of the predictions, conditional quantiles are also robust to measurement noise. Our technique is particularly consequential in High-throughput Genomics, an area that is ushering in a new era in personalized health care, and targeted drug design and delivery. However, check loss, used in quantile regression to drive the estimation process, is not differentiable. We propose $log-cosh$ as a smooth alternative to the check loss. We apply our methods to the GEO microarray dataset. We also extend the method to the binary classification setting. Furthermore, we investigate other consequences of the smoothness of the loss in faster convergence. We further apply the classification framework to other healthcare inference tasks, such as heart disease, breast cancer, diabetes, etc. As a test of the generalization ability of our framework, other non-healthcare related data sets for regression and classification tasks are also evaluated.
\end{abstract}


\section{Introduction}
A quantitative study of gene expression and its underlying regulatory mechanism is of inherent value in  \emph{curing diseases} like heterogeneous tumours and in controlling protein production for biotechnology purposes \cite{natureArt}. This problem can be characterized in terms of understanding gene expression patterns of cells under various biological states. Diseases diagnosis and prognosis may be more broadly understood by measuring and observing changes in gene expression patterns. It might lead to better characterization of where, when, and how genetic instructions are decoded in diseased cells and tissues. Sizeable publicly available datasets have been made available such as the \emph{Connectivity Map} and the \emph{Tor-21} project \cite{tor21} towards such purposes. As a result, there is a surge in mining such gene expression datasets. A noteworthy application of this approach is to improve drug development success rates by screening vast libraries of compounds to affect and regulate the gene expression patterns so as to restore the conditions found in healthy tissue \cite{drugGEO}. Neural networks have been applied for inferring gene expressions \cite{GX6,GX7}, most notably in \cite{D-GEX}, and are used for modeling and simulations in drug development. 
\subsection{Our Approach}
The patterns that the neural networks learn depend largely on the loss function used to drive the training process. As a result, choosing an appropriate loss function is very crucial. With that goal in mind, we pick the \emph{Gene Expression} problem as a test bed for our experiments. We borrow the problem structure from \cite{D-GEX}, where the gene expression problem was framed to predict the so-called \emph{target genes} using already known \emph{landmark genes}. These are a selection of ~1000 genes that have been experimentally computed to be capturing up to $80\%$ of the information contained in the entire genome. \cite{LipGene} also used the same dataset to establish the utility of check loss. Hence, basing our experiments on this dataset to test our propositions would serve to \begin{inparaenum}
    \item benchmark our proposed loss for fair comparisons
    \item experimentally verify the theory of the $log-cosh$ that we construct in the rest of the paper, on an important, large, real-world dataset.
\end{inparaenum}

\section{Technical Motivation}
The Mean Absolute Error (MAE) is a loss function of classical importance \cite{huberRob} in robust regression settings and is often preferred to Mean Squared Error (MSE) under heteroscedastic measurement-error regimes. Its asymmetric counterpart, the check loss (or pinball loss), is used in quantile regression. Majority of modern Machine Learning techniques only focus on predicting the conditional means, whereas with check loss, one could infer the entire conditional distribution, which is useful in quantifying the aleatoric uncertainty in the predictions. However, one severe drawback of using vanilla MAE or its extension in practice, particularly with deep neural networks, is its \emph{non-differentiability}. With a renewed interest in exploiting first and second-order derivatives in meaningful ways, \cite{AdaH, PyHessian,nonCvx,infFunc}, it is pragmatic to look for smoother alternatives to MAE and its extensions. Specifically, we suggest \emph{$log-cosh$} as such an alternative to MAE and back this claim by providing relevant theoretical arguments and empirical validation on many data sets. Additionally,  we consider the \emph{Tilted $log-cosh$} rivaling the role of check loss in quantile regression. Later, \emph{Tilted $log-cosh$} is adapted to binary classification settings to demonstrate its efficacy in learning latent conditional quantiles. We show how, in disease predictive modeling, such latent quantiles can augment explainability. In summary, our contributions are:

\begin{enumerate}
    \item We prove relevant properties of the $log-cosh$ that immediately follow in section-\ref{section:4}. These include 1-Lipschitzness of $log-cosh$ that is used to demonstrate the utility in developing an Adaptive Learning Rate (LALR) training regime. This ensures faster convergence in several classification tasks for the loss function proposed in sections-\ref{section:4.7} and \ref{section:4.8}. The results are reflected in \ref{section:5.1.1}.
    \item We provide a differentiable alternative to MAE, while retaining its statistical robustness. The intuition for the argument is developed in section-\ref{section:4.1}.
    \item We exploit the higher-order differentiability of the $log-cosh$ by applying the L-BFGS optimizer on the GEO microarray dataset. The rationale, backed by our convexity proof, is explored in section-\ref{section:4.6}
    \item Extend $log-cosh$ to the quantile regression and binary classification settings. In section-\ref{section:4.7}, we propose a new loss function for use in binary classification settings, namely, the \textit{Smooth Binary Quantile Classification loss} which, in addition to being able to provide point predictions (like the Binary Cross Entropy), can also be used to quantify uncertainty in the predictions of the network.
\end{enumerate}

A salient observation is how $log-cosh$ lends itself to accomplish the above objectives simultaneously. As a result, our work lays an important foundation for building further. For example, \cite{infFunc} adapted Influence Functions (IFs) from robust statistics to deep learning setting. They can be used to fix labeling errors, provide counterfactuals, and identify out-of-distribution samples on gene networks, which could be extremely interesting. However, estimating IFs is fragile, and we hope that our work can alleviate some problems in this regard.

\section{Related Work}
Quantile regression \cite{QRkoenker} in neural network settings has been explored in recent works. \cite{QR3,QR4}. \cite{QR1} propose a constrained optimization approach for learning multiple non-crossing quantiles. \cite{QR2} introduced a Bayesian neural network for quantile regression based on Asymmetric Laplace Distribution (ALD). Notice that, in our work, when adapting the $log-cosh$ to the binary classification setting, the response variables are modeled using the hyperbolic secant distribution, instead of ALD. \\ \cite{D-GEX} propose a deep learning approach for gene expression inference, known as D-GEX. They posit the inference as a multi-task regression problem and considered a multi-layer feed-forward neural network, trained with MSE loss. Since then, there have been multiple works exploring the application of neural networks to the problem of gene expression inference such as \cite{GX6,GX7,GX8}. On the GEO microarray dataset, \cite{LipGene} have applied quantile regression in the neural network setting, with Lipschitz Adaptive Learning rates. In our work, we recreate a subset of the results with tilted $log-cosh$ (that offers a smooth alternative to the loss in \cite{LipGene}) on the same dataset.  \cite{BQR} adapt the check loss to the binary classification setting. However, we show that tilted $log-cosh$ may also be adapted to the binary classification setting. Additionally, recent work \cite{LCvae} has explored the applicability of the $log-cosh$ to the generative domain as well.\\ The deep learning community has seen a recent resurgence in second-order optimization methods, with the emergence of better compute and tractability options, which were previously infeasible \cite{AdaH,PyHessian,nonCvx,infFunc}. In our work, we show the applicability of L-BFGS \cite{LBFGS}, a standard second-order optimizer, on $log-cosh$.

\section{Theoretical framework} 
\label{section:4}
This section presents some theoretical results that present the $\log-\cosh$ as a viable loss function in the deep learning setting. Most of these results complement the simple 'interoperability' (between the MSE and the MAE) of the $\log-\cosh$ that we presented above. section-\ref{section:4.1}, makes a case for the $\log-\cosh$ in comparison to its closest competitors (namely, the MSE, MAE, and the Huber), sections-\ref{section:4.2},\ref{section:4.3},\ref{section:4.4} present some essential results regarding the convexity and lipschitzness (which is later used in the adaptive learning rate training regimens). Section-\ref{section:4.6} presents an example application of the $\log-\cosh$ which exploits its double-differentiability, section-\ref{section:4.7} presents our extension of the $\log-\cosh$ to the binary classification setting, with section-\ref{section:4.8} and section-\ref{section:4.9} providing results for lipschitzness and relevant regularization for making learning multiple quantiles with the classification loss feasible.
\subsection{Why the \texorpdfstring{$\log-\cosh$}{Lg}}
\label{section:4.1}
The fundamental intuition behind using the $log-cosh$ as a plausible alternative to the MSE and MAE is due to the following elementary observation. 
Consider:
\begin{align*}
\log(\cosh(x))&=\log(\frac{e^x+e^{-x}}{2}) \\
              &=\begin{cases}
      \mid x\mid-\log(2) & \text{large x} \\
       \frac{x^2}{2} & \text{small x}  \\
   \end{cases}
\end{align*}
This, in essence, tells us that the $log-cosh$ behaves like the MAE asymptotically, which is desirable in applications such as \emph{robust regression} where the influence exerted by outliers in response is limited. Additionally, around the origin, $\log-\cosh$ behaves like MSE, retaining the statistical efficiency when measurement errors follow Normal distribution.
\begin{figure}
     \centering
     \begin{subfigure}[b]{0.4\textwidth}
         \includegraphics[width=\textwidth]{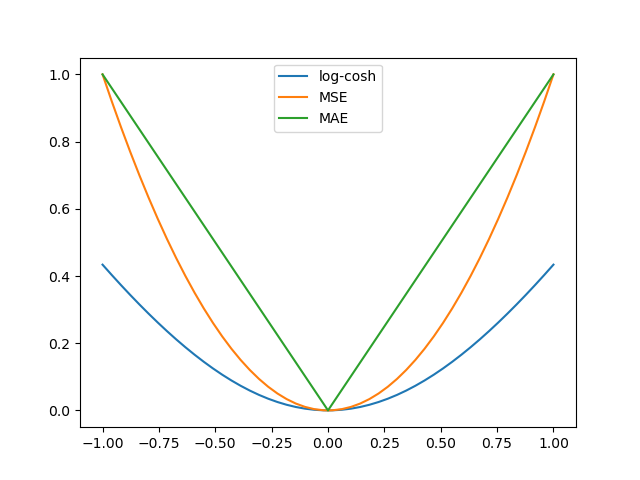}
         \caption{MSE-like behaviour of the $\log-\cosh$}
     \end{subfigure}
     \hfill
     \begin{subfigure}[b]{0.4\textwidth}
         \includegraphics[width=\textwidth]{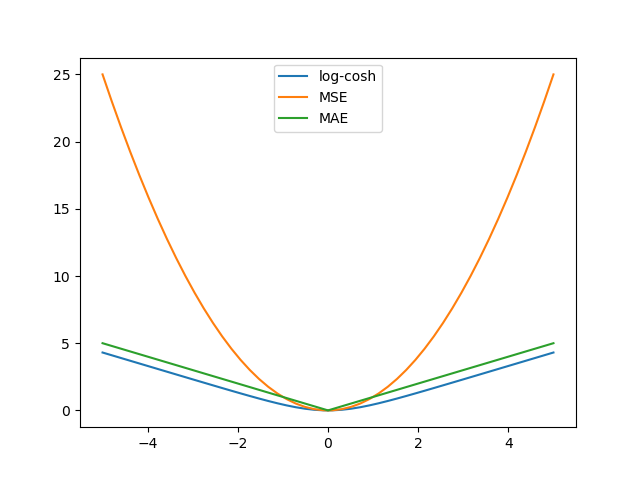}
         \caption{MAE-like behaviour of the $\log-\cosh$}
     \end{subfigure}
\end{figure}

Another useful property of $\log-\cosh$ that we demonstrate is its \emph{robustness to label noise}, meaning that the fluctuations in function output under small perturbations in its input have an upper bound. This places $\log-\cosh$ on an equivalent setting as MAE. This is formalized in section-\ref{app:robust} of the appendix.

\noindent The above also captures our line of interest in the $log-cosh$ as compared to \emph{MSE}, \emph{MAE} or its 'closest competitor', the \emph{Huber} loss, which is defined as:
\[
L_{\delta}(y,\hat{y})=\begin{cases}
      \frac{1}{2}(y-\hat{y})^2 & |(y-\hat{y})|\leq\delta \\
      \delta(|(y-\hat{y})|-\frac{1}{2}\delta) & \text{otherwise} \\
   \end{cases}
\]
The Huber 'intuitively' combines the MAE and MSE in the same sense but in a piece-wise manner. Like Huber, $log-cosh$ can also be further parameterized, if desired, to control the transition from MSE to MAE-like behavior. Consider $\log-\cosh(\frac{x}{h})$, where $h$ can be tuned as per the needs of the application. For instance, if one wants more MAE-like behavior, and the natural logarithm is used, then $h\approx1$ is close to optimal, whereas, for more MSE-like behavior, $h\approx0.7$ is close to optimal. Both MSE and MAE can be modeled as special cases. Hence, in that sense, it captures the same kind of convenience as the Huber in offering us a 'smooth' transition between the MAE and the MSE. At the same time, the $log-cosh$ has the following properties that make it more appealing than Huber in the context of our applications:
\begin{enumerate}
    \item The $log-cosh$ has a tractable and well-studied generating distribution -- the \emph{hyperbolic secant distribution} -- which allows a very convenient extension of the $log-cosh$ to the non-parametric setting. Thus, adapting it for use in binary classification problems using neural networks becomes convenient.
    \item Unlike the Huber, the $log-cosh$ is arbitrarily differentiable \emph{globally} (unlike the limited second-order differentiability of the Huber in the '$\delta$-basin' of its definition). Hence, the application of methods exploiting higher-order differentiability is inevitably more stable for the $log-cosh$.
\end{enumerate}
\subsection{Convexity of the loss}\label{section:4.2}
In this section, we argue that the $log-cosh$ is convex and defer the proof to section - \ref{app:thm1} of the appendix. Convexity of the loss is an immediately desirable quality because of the standard property of convex functions having a unique global minima and being much more "well-behaved" (Convexity implies local Lipschitzness). The latter is significant in ensuring a predictable, 'smooth' trajectory of the optimizer while navigating through the generated loss landscape. 

\begin{thm}[$\log-\cosh$ is convex]
i.e. \\ $J=\Sigma_{i=1}^m\log\cosh(y_i-\theta^Tx_i)$ is convex.
\end{thm}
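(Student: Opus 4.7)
The plan is to reduce the multivariate convexity claim about $J(\theta)=\sum_{i=1}^m \log\cosh(y_i-\theta^Tx_i)$ to the convexity of the scalar function $f(u)=\log\cosh(u)$ on $\mathbb{R}$, and then invoke standard convexity-preserving operations.

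First I would establish the scalar fact: $f(u)=\log\cosh(u)$ is convex on $\mathbb{R}$. The cleanest route is to differentiate twice. We have $f'(u)=\tanh(u)$ and
\[
f''(u) = \sech^2(u) = \frac{1}{\cosh^2(u)} \geq 0 \quad \text{for all } u\in\mathbb{R}.
\]
In fact $f''(u)>0$ strictly, which gives strict convexity of $f$ on the real line; but nonnegativity of the second derivative already suffices for convexity via the standard one-dimensional criterion. This step is essentially a routine calculus check and should pose no obstacle.

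Next I would lift this to the objective $J$. For each fixed data pair $(x_i,y_i)$, the residual map $\theta \mapsto y_i - \theta^T x_i$ is affine in $\theta$. Composition of a convex function with an affine map is convex, so each summand $\theta \mapsto \log\cosh(y_i-\theta^T x_i)$ is convex. Finally, $J$ is a finite nonnegative sum of convex functions, and convexity is preserved under such sums, which concludes the argument.

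The only step that carries any real substance is the sign check on $f''$; everything afterwards is a textbook application of the composition-with-affine-map rule and the sum rule for convex functions. I expect the authors' appendix proof to follow essentially this outline, possibly also noting the strict convexity (since $\sech^2 > 0$) to support any later uniqueness-of-minimizer claim when the design matrix $X=[x_1,\ldots,x_m]^T$ has full column rank, though strict convexity of $J$ itself in $\theta$ requires that additional rank condition and is not stated in the theorem as given.
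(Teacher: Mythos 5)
Your proof is correct, but it takes a different route from the paper's. The appendix proof works directly with the full Hessian of $J$ in $\theta$: it computes $\partial^2 J/\partial\theta_\alpha\partial\theta_\beta=\sum_i \sech^2(y^{(i)}-\theta^Tx^{(i)})\,x^{(i)}_\alpha x^{(i)}_\beta$, writes the Hessian as $H=XDX^T$ with $D_{ii}=\sech^2(\cdot)>0$, and argues that the quadratic form $u^THu$ is positive, concluding positive definiteness. Your argument instead isolates the one-dimensional fact $f''(u)=\sech^2(u)\ge 0$ and then invokes the composition-with-affine-map and nonnegative-sum rules, never forming the matrix $H$. Both establish the theorem, but your route is cleaner on one point where the paper overreaches: $u^THu=(X^Tu)^TD(X^Tu)$ is only guaranteed to be \emph{nonnegative} (it vanishes whenever $X^Tu=0$, which happens for nonzero $u$ unless the $x_i$ span the parameter space), so the paper's claim of strict positive definiteness implicitly assumes a rank condition it never states, and its intermediate identity $u^TXDX^Tu=\|D(X^Tu)\|^2$ is also off (it should involve $D^{1/2}$). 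Positive semidefiniteness is all that convexity requires, and your argument delivers exactly that without the detour; your closing remark correctly identifies the full-column-rank condition under which the stronger strict-convexity claim would actually hold. The paper's Hessian computation does buy something your route does not make explicit, namely the explicit form $H=XDX^T$ that the authors later lean on when motivating second-order optimizers such as L-BFGS.
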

The proof of the above theorem can be found in Section-\ref{app:thm1}. Our proof proceeds by first constructing the \emph{Hessian} corresponding to the loss, and proving that it is \emph{positive-definite}. Convexity guarantees the embedding of $\log-\cosh$ in L-BFGS and in general, second-order optimizers as the constructed Hessian is positive definite ensuring positivity of the eigenvalues of the Hessian, $J$.
\subsection{Lipschitzness}\label{section:4.3}
We would be interested in the Lipschitzness of the $\log-\cosh$ from the perspective of a useful mathematical property and also to demonstrate the utility in developing an \emph{Adaptive Learning Rate (LALR)} training regime. The latter builds on the Lipschitzness of the function \cite{LALR} which is a consequence of the convexity of $\log-\cosh$. In this section, we prove that the $\log-\cosh$ is Lipschitz, and derive a tighter bound for the Lipschitz constant that we use in later experiments.
\begin{lemma}[Lipschitzness of $\log-\cosh$]
$\log-\cosh$ is \emph{at least} 1-Lipschitz
\end{lemma}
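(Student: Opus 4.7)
\medskip

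The plan is to reduce Lipschitzness to a bound on the derivative. Since $\log\cosh$ is smooth on $\mathbb{R}$, a convenient sufficient condition for 1-Lipschitzness is $\sup_{x \in \mathbb{R}} |f'(x)| \le 1$, after which the mean value theorem (or the fundamental theorem of calculus) immediately yields $|f(x) - f(y)| \le |x - y|$ for all $x, y \in \mathbb{R}$.

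First I would compute the derivative explicitly:
\[
\frac{d}{dx}\log\cosh(x) \;=\; \frac{\sinh(x)}{\cosh(x)} \;=\; \tanh(x).
\]
Next I would argue that $|\tanh(x)| \le 1$ for every $x \in \mathbb{R}$. The cleanest way is to note that $\cosh(x) = \tfrac{1}{2}(e^x + e^{-x})$ and $\sinh(x) = \tfrac{1}{2}(e^x - e^{-x})$, so $\cosh(x) - |\sinh(x)| = \min(e^x, e^{-x}) > 0$, giving $|\sinh(x)| < \cosh(x)$ and hence $|\tanh(x)| < 1$.

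With the gradient bound in hand, for any $x, y \in \mathbb{R}$ the mean value theorem gives some $c$ between $x$ and $y$ such that
\[
|\log\cosh(x) - \log\cosh(y)| \;=\; |\tanh(c)|\,|x - y| \;\le\; |x - y|,
\]
which is the definition of 1-Lipschitzness. The same conclusion extends coordinate-wise to the loss $J(\theta) = \sum_{i=1}^{m} \log\cosh(y_i - \theta^\top x_i)$ used in Theorem~1, with a Lipschitz constant controlled by the design matrix (via the chain rule and the bound $|\tanh| \le 1$), which is the form that will be needed for the LALR regime in later sections.

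There is no real obstacle; the only point worth a remark is that the constant $1$ is tight, since $\tanh(x) \to 1$ as $x \to \infty$, so the phrase ``at least 1-Lipschitz'' should be read as ``1-Lipschitz and no smaller constant works'' rather than hinting at a sharper bound. If a tighter, \emph{localized} constant is wanted for optimization purposes, one can replace the global supremum by $\sup_{|x| \le R} |\tanh(x)| = \tanh(R) < 1$ on any bounded region, which is the refinement I would highlight for use in the adaptive-learning-rate derivation.
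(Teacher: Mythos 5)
Your argument is correct and follows essentially the same route as the paper's own proof: compute $\frac{d}{dx}\log\cosh(x)=\tanh(x)$, bound $|\tanh(x)|\le 1$, and invoke the mean value theorem to conclude 1-Lipschitzness. The extra details you supply (the explicit verification that $|\sinh(x)|<\cosh(x)$ and the remark on tightness of the constant) are sound refinements of the same argument, not a different approach.
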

\begin{proof}
It can be easily shown that, if the derivative of a differentiable function is bounded in a given domain by some $\rho$, then the function is \emph{$\rho$-Lipschitz} -- this follows directly from the definition of Lipschitzness and the \emph{Mean-Value-Theorem}.
Now, $\frac{d}{dx}\log\cosh(x)=\tanh(x)$.
Since $|\tanh(x)|\leq 1$, $\log-\cosh$ is \emph{at least} 1-Lipschitz.
\end{proof}

\subsection{A tighter Lipschitz constant}\label{section:4.4}
Let us derive a more meaningful and \emph{tighter} Lipschitz constant for the $\log-\cosh$. It should be noted that the Lipschitz constant is derived for the $\log-\cosh$ specifically in the \emph{neural network} (regression) setting.
\begin{thm}[Tighter Lipschitz Constant of the $\log-\cosh$]
$\log-\cosh$ is Lipschitz, with a Lipschitz constant:
\[
\frac{1}{m}\tanh(g(0)-\mid\mid y\mid\mid).\max_ja_j^{[L]}
\]
\end{thm}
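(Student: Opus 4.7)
The plan is to derive the Lipschitz constant by bounding the supremum of $\|\nabla J\|$, mirroring the argument of the preceding Lemma but exploiting both the structure of $J$ as an average of $\log\cosh$ terms and the forward-pass decomposition of the network output.

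First, I would write $J = \frac{1}{m}\sum_{i=1}^m \log\cosh(y_i - \hat{y}_i)$, where $\hat{y}_i$ denotes the output of the $L$-layer network on input $x_i$, and differentiate with respect to the final-layer parameters. Using $\frac{d}{du}\log\cosh(u)=\tanh(u)$ together with the chain rule gives
\begin{equation*}
\frac{\partial J}{\partial W^{[L]}_{j}} \;=\; -\frac{1}{m}\sum_{i=1}^m \tanh(y_i - \hat{y}_i)\,\frac{\partial \hat{y}_i}{\partial W^{[L]}_{j}},
\end{equation*}
and the factor $\partial \hat{y}_i / \partial W^{[L]}_{j}$ contributes exactly the penultimate-layer activation $a_j^{[L]}$, which accounts for the $\max_j a_j^{[L]}$ that appears in the stated constant.

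Second, I would tighten the bound on the $\tanh$ factor. Because $\tanh$ is odd and monotonically increasing, $|\tanh(y_i - \hat{y}_i)| \le \tanh(|y_i-\hat{y}_i|)$, so crude use of $|\tanh|\le 1$ (as in the preceding Lemma) can be improved as soon as we have any uniform a priori envelope on the residual. Interpreting $g(\cdot)$ as the network's forward map and invoking a boundedness control on $g$ (for example via bounded last-layer activations, or via an initial-trajectory bound anchored at $g(0)$), the triangle inequality yields $|y_i-\hat{y}_i|\le g(0)-\|y\|$ in the relevant regime, and monotonicity of $\tanh$ then produces the desired $\tanh(g(0)-\|y\|)$ factor in place of $1$.

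Finally, I would combine the three ingredients — the $1/m$ from the averaging, the $\tanh(g(0)-\|y\|)$ from the derivative of $\log\cosh$, and the $\max_j a_j^{[L]}$ from the back-propagated penultimate activations — then take the supremum over admissible parameters to recover the bound on $\|\nabla J\|$ and, via the Mean Value Theorem, the Lipschitz constant itself. The main obstacle will be pinning down the precise meaning of $g(0)$ and justifying the inequality $|y_i - \hat{y}_i| \le g(0)-\|y\|$: this is really an assumption about the expressive envelope of the network (bounded activations, or a controlled initial output) rather than a consequence of the loss, so care is needed to state it cleanly before the Lipschitz argument is invoked. Once that is in place, the rest is a routine chain-rule and monotonicity computation.
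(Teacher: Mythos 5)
Your overall skeleton matches the paper's: both arguments factor the final-layer weight-gradient via the chain rule into (derivative of the loss with respect to the output activation) times (derivative of the activation with respect to the pre-activation) times (the penultimate activation), bound the last factor by $\max_j|a_j^{[L-1]}|$, and carry the $1/m$ from the averaging. The divergence --- and the genuine gap --- is in how the factor $\tanh(g(0)-\|y\|)$ is supposed to arise. You propose to obtain it from an a priori envelope $|y_i-\hat y_i|\le g(0)-\|y\|$ ``via the triangle inequality,'' but the triangle inequality runs the wrong way: it yields $|y_i-\hat y_i|\le|\hat y_i|+\|y\|$, a \emph{sum}, not a difference, and $g(0)-\|y\|$ need not even be nonnegative, so monotonicity of $\tanh$ cannot be applied to it as an upper envelope of the residual. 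You flag this yourself as ``really an assumption,'' which is the right instinct, but it means the central inequality of your argument is not established and the stated constant does not follow from your route without an extra hypothesis that the theorem does not state.

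The paper reaches $g(0)$ by a different mechanism: it treats $\partial E/\partial w_{ij}^{[L]}=\frac{1}{m}\tanh(a^{[L]}-y)\cdot K_z$ as a function of the weights, differentiates once more to get $\frac{1}{m}\sech^2(a^{[L]}-y)\cdot K_z$, and argues that since $\sech$ never vanishes the stationary point occurs only where $K_z=\max_j|a_j^{[L-1]}|=0$, which forces $z^{[L]}=0$ and hence $a^{[L]}=g(0)$; substituting that stationary value into $\tanh(a^{[L]}-y)$ and then passing to norms gives $\frac{1}{m}\tanh(g(0)-\|y\|)$. In other words, in the paper $g(0)$ is the output at the particular stationary configuration where the preceding activations vanish --- not an expressive envelope of the network, which is what your proposal needs it to be. To repair your version you would either have to adopt the paper's stationarity argument or state the residual bound $|y_i-\hat y_i|\le g(0)-\|y\|$ explicitly as an additional assumption; as written, the step that replaces the crude bound $|\tanh|\le 1$ with $\tanh(g(0)-\|y\|)$ is unsupported.
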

The proof of the above theorem can be found in section-\ref{app:thm2}

\subsection{Applications to second-order Optimization}\label{section:4.6}
Gradient descent is the de-facto optimization method for neural network training, but there has been a recent interest \cite{AdaH,PyHessian} in adapting some classical second-order methods to neural network optimization. The largest barrier to using second-order optimization techniques (by and large \emph{Line-Search methods} \cite{nocedal2nd}) are:
\begin{itemize}
    \item Repeated computation of \emph{Full} Hessian during backpropagation is not feasible.  This can be alleviated to a certain extent by the use of \emph{approximate} computations of the Hessian. The class of quasi-Newton methods such as \emph{L-BFGS} \cite{LBFGS} aims to do exactly that, while still retaining the functional form of the Newton update.
    \item Noise in the hessian computations makes the use of Newton updates in the case of neural networks impractical. For first-order methods, there are several practical and well-developed methods for dealing with noisy gradient estimates like \emph{Adam}. Some standard practices in neural network training like \emph{mini-batching} are also available. There has been some recent work in this direction, \cite{AdaH} is one promising second-order optimizer which replaces the gradient in the Adam update with the approximation of the diagonal of the Hessian.
\end{itemize}

As stated earlier, one of our objectives is also to test the efficacy of $\log-\cosh$ in second-order optimizers, for fitting the models. Is $\log-\cosh$ a good candidate comparable to MSE? At the outset, MSE appears like an ideal candidate for the application of second-order optimizers, since line search methods in the case of quadratic objective functions are exact. However, this may not be the case with deep neural networks because of the inherent stochasticity and multi-modality, a very plausible scenario in the loss landscape. Multi-modality, in the context of optimization, is defined here as the possibility of having multiple local minima and the absence of global minima in non-convex loss surfaces.

With this line of experiments, we want to show that the $\log-\cosh$ can handle neural network training regimes that require higher-order information about the loss landscape. Our baseline for comparison as mentioned earlier would be the MSE (which is an ideal candidate for the application of second-order methods because the Newton step is \textit{exact} for quadratic functions)

We demonstrate the above by training the network using a popular choice from the family of \emph{quasi-Newton methods} \cite{nocedal2nd} (which are a group of algorithms that approximate the \textit{Newton update} by constructing an approximation to the Hessian instead of computing the full Hessian), namely, the \textbf{L-BFGS} (Limited-Memory BFGS) algorithm.

Any such approximation to the Hessian needs to satisfy the so-called \textit{secant equation}, namely:
\begin{equation*}
    B_{k+1}s_{k}=y_{k}
\end{equation*}
where, $s_{k}=x_{k+1}-x_{k},\qquad y_{k}=\nabla f_{k+1}-\nabla f_{k}$.\\
The BFGS update to the Hessian approximation (denoted by $B_{k}$) is constructed iteratively and satisfies the above condition which is:
\begin{equation*}
    B_{k+1}=B_{k}-\frac{B_{k}s_{k}s_{k}^{T}B_{k}}{s_{k}^{T}B_{k}s_{k}}+\frac{y_{k}Y_{k}^{T}}{y_{k}^{T}s_{k}}
\end{equation*}
We choose to experiment with \emph{L-BFGS} since
The functional form of the update that the method uses is the same as the \emph{Newton Update}, i.e. a step in the direction: $p_k=-B_k^{-1}\nabla f_k$, where $B_k$ is the approximate Hessian constructed by the algorithm \cite{nocedal2nd}. Intuitively, L-BFGS ought to converge in a few steps in this direction (for the cases of quadratic objective functions), bringing us close to the one-step convergence in the Newton update.
In section \ref{section:5.1.1}, we show through experiments with L-BFGS and a 'medium' neural network architecture, that the $\log-\cosh$ performs significantly better than the MSE. Additionally, significantly less over-fitting can be seen in figure-\ref{fig:Figure-1}. On classification and regression tasks, $\log-\cosh$ is at least on par with other benchmark loss functions and optimizers.

\subsection{Extension to the binary-classification setting}\label{section:4.7}
It is important to show that our proposed loss function can be readily extended to the binary classification setting. The following theorem formalizes the classification marker and construct of the tilted $\log-\cosh$.
\begin{thm}
The \textbf{Smooth Binary Quantile Classification Loss} derived from the $log-cosh$ is:
\begin{align*}
L(y_i, \hat{y_i})&= y_i\log(\hat{p_i})+(1-y_i)\log(1-\hat{p_i})\\
\hat{p_i}&=1-F_{\tau}(\hat{y_i})\\
\hat{y_i}&=f_{\tau}(x_i) \quad \text{ where, $f_{\tau}$, is the latent function}
\end{align*}
For any real-valued random variable $Z$, with distribution function $F(z)$, with $F(z) = P(Z \le z) $, the quantile function $Q(\tau)$ is given as
$Q(\tau) = F^{-1}(\tau) = \inf\{r: F(r) \ge \tau\} $ for any $0 < \tau < 1 $. Define $\tau$ as the marker for a typical quantile loss. Then, the CDF, $F_{\tau}(\cdot)$ of $f$ assumes the closed-form expression:
\begin{equation*}
F_{\tau}(x)= \begin{cases}
\tau+\frac{4\tau}{\pi}\tan^{-1}(\tanh(\frac{x}{2})) & x\leq 0\\
\tau+\frac{4(1-\tau)}{\pi}\tan^{-1}(\tan(\frac{x}{2})) & x>0
   \end{cases}
\end{equation*}
\end{thm}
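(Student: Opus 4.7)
The plan is to identify the tilted hyperbolic secant density associated with the tilted $\log-\cosh$ loss, integrate it to obtain $F_\tau$ in closed form, and then compose $1-F_\tau(\hat y_i)$ with the standard Bernoulli log-likelihood to recover the stated classification loss. By direct analogy with how the asymmetric Laplace density arises from the check loss (engineered so that the $\tau$-quantile sits at the origin), I would posit
\[
f_\tau(x) =
\begin{cases}
\dfrac{2\tau}{\pi}\sech(x), & x \le 0,\\[4pt]
\dfrac{2(1-\tau)}{\pi}\sech(x), & x > 0,
\end{cases}
\]
and verify normalization along with the $\tau$-quantile property using $\int_{0}^{\infty}\sech(t)\,dt = \pi/2$, which pins down the asymmetric constants $\tfrac{2\tau}{\pi}$ and $\tfrac{2(1-\tau)}{\pi}$.

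Next I would compute $F_\tau(x)=\int_{-\infty}^{x} f_\tau(t)\,dt$ using the Gudermannian antiderivative
\[
\int \sech(t)\,dt \;=\; 2\arctan\!\bigl(\tanh(t/2)\bigr) + C.
\]
For $x \le 0$ this gives $F_\tau(x) = \tfrac{2\tau}{\pi}\bigl[\,2\arctan(\tanh(t/2))\,\bigr]_{-\infty}^{x} = \tau + \tfrac{4\tau}{\pi}\arctan(\tanh(x/2))$, since $\arctan(\tanh(t/2))\to -\pi/4$ as $t\to-\infty$ supplies the additive $\tau$. For $x > 0$ I would split the integral at the origin: the left half contributes the full mass $F_\tau(0)=\tau$ (the designed $\tau$-quantile), and the right half contributes $\tfrac{4(1-\tau)}{\pi}\arctan(\tanh(x/2))$, which is the second branch of the stated closed form. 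The $\tan(x/2)$ appearing in the second branch of the theorem statement should read $\tanh(x/2)$; otherwise $F_\tau$ fails to be continuous or bounded by $1$.

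Finally, to deliver the classification loss I would invoke the standard latent-variable reduction of binary quantile regression: posit a latent response whose conditional $\tau$-quantile is $f_\tau(x_i)$ with additive tilted-hyperbolic-secant errors. Then $\hat p_i = P(y_i = 1 \mid x_i) = 1 - F_\tau(\hat y_i)$ with $\hat y_i = f_\tau(x_i)$, and substituting this into the Bernoulli negative log-likelihood produces exactly $y_i\log \hat p_i + (1-y_i)\log(1-\hat p_i)$.

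The main obstacle is not the antiderivative computation but the first step: pinning down precisely which asymmetric density the tilted $\log-\cosh$ generates, so that the density integrates to $1$, the origin is genuinely its $\tau$-quantile, and the prefactors $\tfrac{4\tau}{\pi}$, $\tfrac{4(1-\tau)}{\pi}$ in the theorem fall out cleanly without an extra rescaling. Once the density is in hand, the CDF follows from one application of the Gudermannian identity and the Bernoulli reduction is routine.
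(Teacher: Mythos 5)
Your proposal is correct and follows essentially the same route as the paper: the paper likewise works with the tilted hyperbolic secant density $\frac{2}{\pi}\sech(x)\bigl[\tau\, I(x<0)+(1-\tau)\, I(x\ge 0)\bigr]$ (nominally deriving it from $f\propto e^{-L}$ for the tilted $\log\cosh$, whereas you posit it by analogy with the ALD and verify normalization and the $\tau$-quantile property, which is arguably the cleaner justification) and then integrates it with the same Gudermannian antiderivative $\int\sech(t)\,dt=2\arctan(\tanh(t/2))$, splitting at the origin exactly as you do. Your remark that the $\tan(x/2)$ in the second branch of the statement should read $\tanh(x/2)$ is confirmed by the paper's own proof, which obtains $\arctan(\tanh(x/2))$ in both branches.
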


The proof of the above theorem can be found in section-\ref{app:thm3} of the Appendix.
Note: $\tau$ is the marker for quantile losses which can be extended from $logcos{h(x)}$, making this loss interpretable as well. For example, $\tau=0.5$ gives us the median i.e. MAE and $L, L', L"$ are well-defined for $\tau=0.5$. Thus, at $\tau=0.5$, the smooth, quantiled version of MAE i.e $logcos{h(x)}$ is continuous and twice differentiable and interpretable in the sense of binary quantile regression and binary quantile classification \cite{BQR}
\subsection{Lipschitzness of the sBQC loss}\label{section:4.8}
\emph{Smooth Binary Quantile Classification Loss}, defined above, is also Lipschitz. In the following theorem, we provide a tighter bound for the Lipschitz constant, which is used in \emph{LALR} training regime. 
\begin{thm}[Lipschitz constant for the \emph{sBQC}]
The Binary Smooth Quantile Classification Loss has the Lipschitz constant:
\[
\frac{2}{\pi} \max\left(1, \frac{1-\tau}{\tau}, \frac{\tau}{\tau-1}\right)
\]
\end{thm}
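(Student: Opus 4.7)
The plan is to reduce the Lipschitz bound on $L$ to a uniform pointwise bound on $|\partial L/\partial\hat{y}|$ and then invoke the mean-value theorem. Since $y\in\{0,1\}$ takes only two values, the loss specialises to
\[
y=0:\ L=\log F_\tau(\hat{y}),\qquad y=1:\ L=\log\bigl(1-F_\tau(\hat{y})\bigr),
\]
so $|\partial L/\partial\hat{y}|$ is either the hazard rate $f_\tau/F_\tau$ or the reverse hazard $f_\tau/(1-F_\tau)$, with $f_\tau:=F_\tau'$. Exhibiting a single $K$ that dominates both over $\hat{y}\in\mathbb{R}$ therefore suffices.

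The computational preliminary is to put $f_\tau$ in closed form. Differentiating the piecewise $F_\tau$ supplied by the previous theorem and using the identity
\[
\frac{d}{dx}\arctan\bigl(\tanh(x/2)\bigr)=\frac{\tfrac12\sech^2(x/2)}{1+\tanh^2(x/2)}=\tfrac{1}{2}\sech(x),
\]
which follows from $\cosh x=\cosh^2(x/2)+\sinh^2(x/2)$, collapses the density to $f_\tau(\hat{y})=\tfrac{2\tau}{\pi}\sech(\hat{y})$ on $\hat{y}\le 0$ and $f_\tau(\hat{y})=\tfrac{2(1-\tau)}{\pi}\sech(\hat{y})$ on $\hat{y}>0$, with anchor $F_\tau(0)=\tau$. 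This tidy form is what makes the subsequent bounds tractable.

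Next I would split the real line at the origin and bound each of the four ($y$, side) combinations separately. On $\hat{y}\ge 0$ we have $F_\tau\ge\tau$ and on $\hat{y}\le 0$ we have $1-F_\tau\ge 1-\tau$; combined with $\sech\le 1$, these inequalities immediately furnish the boundary bounds $\tfrac{2(1-\tau)}{\pi\tau}$ for $y=0,\hat{y}>0$ and $\tfrac{2\tau}{\pi(1-\tau)}$ for $y=1,\hat{y}<0$, each attained at $\hat{y}=0$ because $f_\tau$ is monotonically decreasing away from $0$ while the denominator grows. For the two remaining pieces the anchor value is $\tfrac{2}{\pi}$. Taking the maximum of the four candidates $\tfrac{2}{\pi},\,\tfrac{2(1-\tau)}{\pi\tau},\,\tfrac{2}{\pi},\,\tfrac{2\tau}{\pi(1-\tau)}$ produces $\tfrac{2}{\pi}\max\!\bigl(1,\tfrac{1-\tau}{\tau},\tfrac{\tau}{1-\tau}\bigr)$, matching the statement (interpreting its third term with its natural sign).

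I expect the main obstacle to be precisely the two remaining pieces ($y=0,\hat{y}<0$ and $y=1,\hat{y}>0$), where both the numerator and the denominator of the relevant ratio vanish at the appropriate infinity and the trivial inequalities above become useless. To close the gap I would show the ratio is maximised at $\hat{y}=0$ by establishing its monotonicity on each of these half-lines directly, either by explicit differentiation exploiting the piecewise structure of $F_\tau$ or, more elegantly, by rewriting the denominator via the Gudermannian $\mathrm{gd}(\hat{y})=2\arctan(\tanh(\hat{y}/2))$ and using $\sech(\hat{y})=\mathrm{gd}'(\hat{y})$ to reduce the required inequality to an elementary comparison between $\mathrm{gd}$ and its derivative. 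Once that monotonicity is in hand, the four boundary values are genuine suprema and the proof concludes.
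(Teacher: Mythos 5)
Your overall route is the same as the paper's: reduce the Lipschitz bound to a bound on the derivative of the loss in the latent variable, split into the four (sign of $\hat y$, value of $y$) cases, and read off candidate maxima at $\hat y=0$. The paper does the identical computation with difference quotients $|L(y,z_2)-L(y,z_1)|/|z_2-z_1|$ and simply asserts that each quotient is maximised as $z_1,z_2\to 0$. Your density computation ($f_\tau=\tfrac{2\tau}{\pi}\sech$ on $\hat y\le 0$, $\tfrac{2(1-\tau)}{\pi}\sech$ on $\hat y>0$) is correct, your two ``easy'' cases are handled correctly, and you have correctly located the only nontrivial step.

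The problem is that the monotonicity you defer in order to close the two hard cases is false, so this is a genuine gap that cannot be repaired in the direction you propose. Take $y=0$, $\hat y<0$: the ratio is $f_\tau/F_\tau=\sech(\hat y)/\bigl(\tfrac{\pi}{2}+\mathrm{gd}(\hat y)\bigr)$ (the $\tau$ cancels). By L'H\^opital, or directly from the exponential tails $f_\tau\sim F_\tau\sim \mathrm{const}\cdot e^{\hat y}$, this tends to $1$ as $\hat y\to-\infty$; and since $\tfrac{d}{dx}\bigl[\tfrac{\pi}{2}+\mathrm{gd}(x)-\sech(x)\bigr]=\sech(x)\bigl(1+\tanh(x)\bigr)>0$, the ratio increases monotonically from $2/\pi\approx 0.637$ at $\hat y=0$ toward $1$ as $\hat y\to-\infty$ --- i.e.\ away from the origin, not toward it (numerically it is already $\approx 0.991$ at $\hat y=-3$). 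The symmetric phenomenon occurs for $y=1$, $\hat y>0$. Hence the supremum of $|\partial L/\partial\hat y|$ on each hard half-line is $1$, not $2/\pi$, and the correct Lipschitz constant is $\max\bigl(1,\tfrac{2(1-\tau)}{\pi\tau},\tfrac{2\tau}{\pi(1-\tau)}\bigr)$. The stated constant $\tfrac{2}{\pi}\max(1,\ldots)$ is therefore not a valid Lipschitz bound for $\tau\in\bigl(\tfrac{2}{2+\pi},\tfrac{\pi}{2+\pi}\bigr)$, in particular at $\tau=\tfrac12$. The paper's own proof makes the same unjustified assertion at exactly the point you flagged, so your diagnosis of where the difficulty sits is sharper than the paper's treatment; the fix is to replace the $2/\pi$ entries by $1$, not to prove the proposed monotonicity.
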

The proof of the above theorem can be found in section-\ref{app:thm4} of the Appendix

\subsection{Regularization}\label{section:4.9}
Finally, we describe a penalty term that needs to be added to classification/regression loss. To that end, we specify that our model can be rewritten as:
\[
y=I(z\geq 0), z=Q_x(\tau)=f_{\tau}(x)+\epsilon
\]
where $\epsilon\sim HSD(\tau)$ and $HSD(y;\tau)=\frac{2}{\pi}\sech(y)[\tau. I(y<0)+(1-\tau).I(y\geq 0)]$. The network learns the underlying latent function represented by $Q_x(\tau)$. When multiple quantiles are fit separately, it may be possible for a lower quantile to be greater than an upper quantile -- known as quantile crossing. The following regularizing term penalizes such quantile crossing:
\[
L_{reg}=\sum_{i=1}^{n}\sum_{p=1}^{m-1}\max(0,Q_{x_i}(\tau_p)-Q_{x_i}(\tau_{p+1}))
\]
\section{Experimental results}
The code and data for reproducing the results may be found at \cite{C&D} and \cite{geodata}. Code for different architectures is also uploaded as a supplementary file.

\subsection{\emph{GEO Microarray} dataset}\label{section:5.1}

\begin{table}[ht]
\centering
\begin{adjustbox}{width={0.5\textwidth}, center}
\begin{tabular}{ccll}
\hline
Model name & Network size & Dropout \\ \hline
Small architecture & [300,300] & 10\% \\ \hline
Medium architecture & [1000,1000] & 10\% \\ \hline
Wide Medium architecture 1& [2000] & 20\% \\\hline
Wide Medium architecture 2& [2000] & 10\% \\\hline
Large architecture & [3000,3000] & 10\% \\ \hline

\end{tabular}
\end{adjustbox}
\caption{Model architectures}
\label{table:architectures}
\end{table}

\noindent \emph{Training methodology:} We run tests on five architectures. Three of them have 2-layer fully-connected neural network architectures which differ only in the size of their hidden layers. The 'small', 'medium', and 'large' architectures have [300,300], [1000,1000] and [3000,3000] as the size of their hidden layers respectively, with a dropout of $0.1$ for each and two 1-layer fully-connected neural network architecture 'wide-medium' with [2000] as the size of its hidden layer and one with a dropout of $0.2$ and another with a dropout of $0.1$ (as defined in \cite{LipGene}). The dataset has 943 \emph{landmark genes}, which serve as the input features, and 4760 \emph{target genes}, which are modeled as the outputs. We compare \emph{$log-cosh$} against \emph{MSE} and \emph{MAE}. Each model was run on different training regimes, namely, \emph{Constant Learning Rate with Adam}, \emph{Lipschitz Adaptive Learning Rate with Adam} and \emph{L-BFGS} (only with $log-cosh$ and MSE). We train two fully-connected neural network architectures for all of the runs, differing in the sizes of their intermediate layers. Each experiment was run for 500 epochs each.\\
\subsubsection{Results}\label{section:5.1.1}
The results yielded by our experiments allow us to draw insights into the superior performance of $log-cosh$. Firstly, networks trained with Log-cosh and MSE consistently outperform MAE. Secondly, our expectation to have L-BFGS converge in fewer iterations compared to its first order counterparts because of the quadratic nature of the $log-cosh$ (section-\ref{section:4.6}) is well-supported by the results in table-\ref{table:GEO} We observe superior performance of $log-cosh$ with LFBGS in comparison to MSE for the medium architecture and comparable performance when deployed on the small architecture. We extend the experiments on three other architectures mentioned in \cite{LipGene} namely Wide-medium 
(1 \& 2) and Large (see table \ref{table:architectures}). The performance of $log-cosh$ is either comparable or (marginally) superior to MSE but definitely superior to the MAE($L_1$) loss. We further experimented with additional hidden layers while keeping the number of neurons in each layer constant. We do not observe any significant improvement in performance (of all the three losses) on deeper architectures.

Additionally, for several datasets such as Heart Disease, Ionosphere, Pima, and WBC, the LALR sBQC model converges significantly faster than their non-adaptive counterparts. For WBC, the LALR sBQC converges 11 times faster than CLR SBQ. For Heart disease, Haberman, Ionosphere, and Wisconsin Breast Cancer (WBC), we also observe that the sBQC converges faster to a certain accuracy than BCE. We achieved this result by setting the maximum accuracy of CLR sBQC as the threshold and letting the LALR version run till sBQC LALR achieves the threshold obtained by its fixed LR counterpart. 

Finally, another interesting observation highlighted by the figure-\ref{fig:Figure-1}, is that MSE overfits to a substantially greater extent than $log-cosh$ while using L-BFGS as our optimizer. A rigorous undertaking to investigate this aspect is deferred for future work. Nonetheless, we observe the superior performance of $log-cosh$ on 'medium' architectures in terms of RMSE and data sets and at least, equivalent performance as the baselines provided for by the MSE and MAE results. These tests validate that the $log-cosh$ is a valid (and sometimes better) alternative to both the MSE and MAE.
 \begin{table*}[ht]
 \centering
\begin{adjustbox}{width={\textwidth}, center}
\begin{tabular}{cclllll}
 \hline
 Loss & Optimizer & D-GEX(small) & D-GEX(medium) & D-GEX(wide medium 1) & D-GEX(wide medium 2) & D-GEX(large)  \\ 
 \hline
 \multirow{3}{*}{$log-cosh$} & Adam &0.6348 &0.6777 & 0.8112& 0.7963& 0.8750 \\ \cline{2-7} 
                         & LALR-Adam &0.6355  &0.6054 & \textbf{0.6670}&\textbf{0.6632}& \textbf{0.6349} \\ \cline{2-7} 
                           & LBFGS     &\textbf{0.5642} &\textbf{0.5490}& 0.8165& 0.7953 & 0.8706\\ \hline
\multirow{3}{*}{MSE}      & Adam  &0.6321  &0.6809&0.8160 & 0.8057&0.8741\\ \cline{2-7} 
                          & LALR-Adam &0.5937 &\textbf{0.5389}& \textbf{0.6512}& \textbf{0.6505} &\textbf{0.6449}\\ \cline{2-7} 
                          & LBFGS  &\textbf{0.5582} &0.6273&0.8217 & 0.8056 &0.8697\\ \hline
 \multirow{2}{*}{MAE}      & Adam &0.6528 &0.7152& 0.8158&0.8400 &0.8990 \\ \cline{2-7} 
                          & LALR-Adam &0.6520 &0.6261& 0.8830&0.8822 &0.9250 \\ \hline
\multirow{2}{*}{Check Loss}      & Adam  &0.6530  &0.7155&0.8164 & 0.7961&0.6540\\ \cline{2-7} 
                          & LALR-Adam &0.9308    &0.9293 & 0.9747& 0.9751&0.9314\\ \hline
\multirow{2}{*}{Huber Loss}      & Adam  &0.7214  &0.7448 &0.8376 & 0.8465 &0.8887\\ \cline{2-7} 
                          & LALR-Adam &0.9270    &0.9225 & 0.8764 &0.8789 &0.9261\\ \hline                          
 \end{tabular}
 \end{adjustbox}
 \caption{RMSE on GEO microarray: L-BFGS doesn't apply to MAE, Check and Huber Losses}
 \label{table:GEO}
 \end{table*}

\begin{table*}[ht]\label{Table-1}
\renewcommand{\arraystretch}{1.5}
\centering
\begin{adjustbox}{width={\textwidth}, center}
\begin{tabular}{lllllllll|llllllll}
\hline
\multirow{3}{*}{Name} & \multicolumn{8}{c}{sBQC}                                 & \multicolumn{8}{c}{BCE}                                  \\ \cline{2-17} 
                      & \multicolumn{4}{c}{Adam} & \multicolumn{4}{c}{LALR-Adam} & \multicolumn{4}{c}{Adam} & \multicolumn{4}{c}{LALR-Adam} \\ \cline{2-17} 
                      & CP  & JI  & F1  & \%Acc. & CP   & JI   & F1   & \%Acc.   & CP  & JI  & F1  & \%Acc. & CP   & JI   & F1   & \%Acc.   \\ \hline
Heart Disease &\textbf{0.6553}&\textbf{0.7162}&\textbf{0.8346}&\textbf{82.78}&\textbf{0.5227}&\textbf{0.6506}&\textbf{0.7883}&\textbf{76.22}&0.6058&0.6883&0.8153&80.32&0.5081&0.6103&0.7581&75.41\\ \hline
WBC                    &\textbf{0.9067}&\textbf{0.8867}&\textbf{0.94}&\textbf{95.71}&\textbf{0.9304}&\textbf{0.9151}&\textbf{0.9556}&\textbf{96.78}&0.8906&0.8679&0.9292&95.00&0.9154&0.8981&0.9463&96.07          \\ \hline
Pima                  &\textbf{0.4005}&0.3790&0.5497&75.00&0.4215&0.4716&0.6410&\textbf{76.29}&0.3792&0.4238&0.5953&73.37&0.4411&0.4840&0.6523&73.70   \\ \hline
Titanic 
&0.6996&0.6872&0.8165&\textbf{85.30}&0.2471&0.3662&0.5283&64.31
&0.7153&0.6946&0.8278&86.06&0.6652&0.6627&0.7965&83.9306\\ \hline
Haberman         &\textbf{0.2067}&0.2142&0.3529&79.67&0.2548&0.2558&0.4074&81.30&0.1787&0.2173&0.3571&80.48&0.3045&0.2631&0.4167&79.67\\ \hline
Ionosphere            &0.6802&0.8241&0.9035&86.42&\textbf{0.7031}&0.81&0.8950&\textbf{86.42}&0.7193&0.8381&0.9119&87.85&0.7005&0.8118&0.8961&86.42\\ \hline
Sonar                 &\textbf{0.5821}&\textbf{0.6964}&\textbf{0.8210}&\textbf{79.51}&\textbf{0.6186}&0.6923&\textbf{0.8181}&\textbf{80.72}&0.5556&0.6842&0.8125&78.31&0.4792&0.6441&0.7835&74.69          \\ \hline
Banknote              &\textbf{1.0}&\textbf{1.0}&\textbf{1.0}&\textbf{100.0}&\textbf{0.9927}&\textbf{0.9921}&\textbf{0.9960}&\textbf{99.63}&1.0&1.0&1.0&100.0&0.9926&0.9921&0.9960&99.63\\ \hline
\end{tabular}
\end{adjustbox}

\caption{Binary Classification Results:  Cohen’s Kappa (CP), Jaccard index (JI),
F1-score (F1) and Accuracy ($\%$Acc.)}
\end{table*}

\begin{table*}[ht]
\centering
\renewcommand{\arraystretch}{1.25}
\begin{tabular}{llll|lll|ll}
\hline
\multirow{2}{*}{Name} & \multicolumn{3}{c}{$Log-cosh$}                                                         & \multicolumn{3}{c}{MSE}                                                              & \multicolumn{2}{c}{MAE}                                                              \\ \cline{2-9} 
                      & \multicolumn{1}{c}{Adam} & \multicolumn{1}{c}{LALR-Adam} & \multicolumn{1}{c}{L-BFGS} & \multicolumn{1}{c}{Adam} & \multicolumn{1}{c}{LALR-Adam} & \multicolumn{1}{c}{LBFGS} & \multicolumn{1}{c}{Adam} & \multicolumn{1}{c}{LALR-Adam} \\ \hline
Abalone               &3.3878&3.149&\textbf{3.14386}&3.3269&3.1982&\textbf{3.1232}&3.5371&3.1979                           \\ \hline
Boston                &10.7078&9.9378&\textbf{9.8665}&9.7432&9.7171&\textbf{9.7173}&9.8886&10.1969                          \\ \hline
Concrete              &17.1732&17.6689&\textbf{17.6579}&17.9485&17.6132&\textbf{17.62085}&17.9380&17.6980                           \\ \hline
Energy                &10.8102&10.3885&\textbf{10.2397}&11.6565&10.2407&\textbf{10.2499}&10.3692&10.3747\\ \hline
Wine                  &1.083&0.8673&\textbf{0.8371}&1.0353&0.8590&0.8367&1.0783&0.9311                           \\ \hline
\end{tabular}
\caption{Generalizing L-BFGS to UCI Regression data: L-BFGS not applicable to MAE (Section-\ref{section:5.1}, RMSE values)}
\label{table:2}
\end{table*}

\begin{figure}[ht]
\centering
\includegraphics[width=0.4\textwidth]{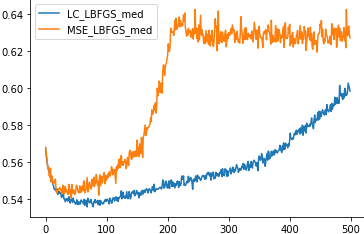}
\caption{RMSE vs epochs for MSE and $log-cosh$ trained using L-BFGS on the medium architecture. MSE overfits.}
\label{fig:Figure-1}
\end{figure}

\subsection{Generalizability to other tasks}
We also performed generalizability tests (to other tasks) on some standard UCI regression datasets to show that our method generalizes beyond just the GEO microarray dataset. These are collated in table-\ref{table:2}. Root Mean Squared Error (RMSE) on validation sets of various UCI regression datasets is reported. $Log-cosh$ is compared against MSE and MAE while using three different optimizers namely, Adam (with constant learning rate), Adam (with LALR), and L-BFGS. 
Clearly, we observe that the $log-cosh$ performs just as well as the MSE and MAE baseline results.

\subsection{Binary Classification Problems}
We use a standard three-layer neural network, with a hidden layer of size 100 for training the UCI binary classification datasets. We report \emph{Cohen's Kappa} (CP), \emph{Jaccard index} (JI), \emph{F1-score} and Accuracy. The \emph{BCE} is used as the baseline for comparison against the proposed method, \emph{sBQC}. \emph{sBQC} exhibits superior performance on most of the classification tasks.

\subsubsection{Model Evaluation}
5-fold cross-validation is adopted on the data. We split the data into training and validation sets (80-20). These sets are selected such that their class distribution is representative of the original dataset (stratified k-fold). The training set is split into five folds (one fold is used as the test set), with the validation set being kept separate. This technique ensures that the models are not biased and the results are generalizable. Each fold was scrutinized carefully to check for no data leakage from the train set to the validation set, on all the runs. For all our experiments, we use the free version of Google colab, with 1 K80 GPU, 2 vCPUs, and 12GB RAM.
\par Each experiment is run 20 times and the model is trained for 50 epochs in each run. The mean of the metrics under consideration is reported. We didn't report standard deviation values from the mean accuracy. The consistent performance of sBQC loss is apparent from the standard deviation in accuracy. The standard deviation values range from $\pm0.17$ to $\pm0.37$ which is reasonably small. The performance of sBQC and BCE was cataloged over twenty runs and we find that on the Pima Indian diabetes, Wisconsin Breast Cancer(WBC), Cleveland Heart Disease, and Haberman's survival datasets, sBQC achieves better accuracy (the optimizer used for these runs was Adam with LALR).

\subsection{Comparison of different learning paradigms with LALR}

We ran experiments on the GEO Microarray dataset using the small, medium, wide medium 1, wide medium 2 and large architectures as explained in \ref{table:architectures}. We have compared the performance of LALR with $log-cosh$ with a constant learning rate of 0.1 using Adam's optimizer(results of Adam with learning rate of 0.01 in Table 2) and an exponential decaying learning rate given by \( 0.9\times e^{0.0001*\text{epoch}}\). We observe that $log-cosh$ consistently outperforms the other baseline paradigms on all architectures.
\begin{table*}\label{table5}
 \centering
\begin{adjustbox}{width={\textwidth}, center}
\begin{tabular}{cclllll}
 \hline
 Optimizer & loss & D-GEX(small) & D-GEX(medium) & D-GEX(wide medium 1) & D-GEX(wide medium 2) & D-GEX(large)  \\ 
 \hline
 \multirow{3}{*}{Adam(0.1)} & Check Loss & 1.1194 & 2.2694 & 0.9112 &0.9172& 6.1569 \\ \cline{2-7} 
                         & MSE & 1.0118  & 1.5406 & 0.8968&0.9022& 4.0935 \\ \cline{2-7} 
                           & MAE     &1.1244 & 2.2837 & 0.9121& 0.9185 & 6.2656\\ \hline
\multirow{3}{*}{Adam(Exponential Decay)}      & Check Loss  &0.7329  &0.7063&0.9722 & 0.9778&0.6922\\ \cline{2-7} 
                          & MSE &0.7650 &0.7346& 0.9500& 0.9529 &0.6997\\ \cline{2-7} 
                          & MAE  &0.7305 &0.7103&0.9737 & 0.9791 &0.6900\\ \hline
LALR & $log-cosh$  &\textbf{0.6355}  &\textbf{0.6054} & \textbf{0.6670}&\textbf{0.6632}& \textbf{0.6349} \\ \hline
 \end{tabular}
 \end{adjustbox}
 \caption{Comparison of different learning paradigms with LALR}
 \end{table*}
 
\subsection{Quantiles and Interpretability}
This section aims to demonstrate a practical example of the explainability that learning multiple quantiles can offer. Using the notation we introduced in \ref{section:4.9}, in a binary classification problem, we would be interested in observing for values of input the make latent function zero i.e. $z=0$ (since that specifies our decision boundary). That involves solving the equation $Q_x(\tau)=0$, in $\tau$, for a fixed $x$. Assuming that we are able to solve the above equation for \emph{any given x} we can make an important claim:\\ \emph{Given $x$ being the value that the variable of interest takes on, and for the same $\tau$ being the solution to $Q_x(\tau)=0$, at $x$, there is a $\tau\%$ chance that $x$ would be classified as $0$, and $(1-\tau)\%$ chance as $1$}. 
\begin{figure}[ht]
\centering
\includegraphics[width=0.4\textwidth]{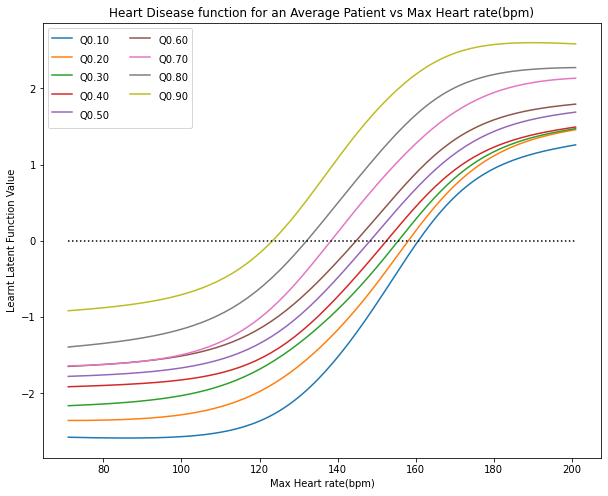}
\caption{Multiple quantiles, heart-rate dataset}
\label{fig3}
\end{figure}

\begin{figure}[ht]
\centering
\includegraphics[width=0.4\textwidth]{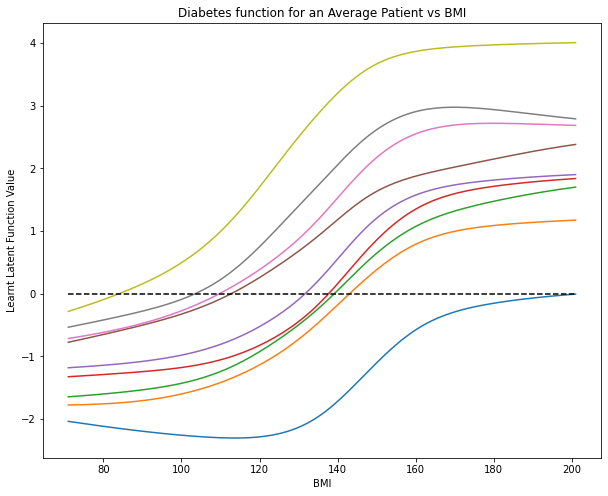}
\caption{Multiple quantiles, Pima Indian dataset}
\label{fig4}
\end{figure}
For instance, we produce the curve for the Heart Rate dataset where we used the $log-cosh$ to learn the quantiles. An example reading of the curve would be: If there are two patients whose heart rates are 120 BPM and 140 BPM, then there is a 90\% and a 70\% probability respectively, that the patients do not have heart disease (Fig.\ref{fig3}). Similar trends are observed for the Pima Diabetes data set (Fig. \ref{fig4}).
These explanations are ante-hoc, baked-in to the model itself, do not require any additional post-processing, unlike standard XAI techniques like shaplay and LIME.

\section{Discussion and Conclusion} Can gene expression inference techniques lead to cheaper drug design? A good precedence has been set already with leading drug manufacturers like Pfizer and Genentech using AI-powered solutions in their search for immuno-oncology drugs and cancer treatments. Efficient and accurate inference techniques are likely to produce quicker, cheaper, and more effective drug discovery leading to equitability in healthcare \cite{drugGenX}. We foresee the technique developed in the paper being used in a similar manner as DeepCE, an efficient Deep Learning-based inference model \cite{COVIDgenX} which predicts correlations between gene expression and drug response. The method has helped identify drug re-purposing candidates for COVID-19 out of which two drugs (cyclosporine and anidulafungin) have received regulatory approval from the FDA.\\
The UN Department of Social and Economic Affairs suggests 17 goals \cite{UNsdg} which offer a systematized framework to achieve more equitable conditions of living. Developing computational and analytical models of gene regulations can contribute to one of the goals, which is to provide and improve \emph{Good-Health and Well-Being}. Adopting a deep learning approach to this problem allows us to exploit a large amount of gene expression data. In addition, making these black-box models more explainable and robust is a significant step in the direction of harnessing the potential of gene expression dataset.

Our paper studies the analytical properties of well-known $log-cosh$ and extend it to estimate conditional quantiles to binary classification settings. On the former,  we provided convexity guarantees of $log-cosh$ which make their application in second order optimizers such as LBFGS and also in ADAM, more grounded in theory. We prove Lipschitzness of $log-cosh$ and used it to demonstrate the utility in developing an Adaptive Learning Rate (LALR) training regime. This ensures non-trivial, automated learning rate computation and adaptation. On the later, we propose a new loss function for use in binary classification settings, namely, the Smooth Binary Quantile Classification loss which, in addition to being able to provide point predictions (like the Binary Cross Entropy), can also be used to quantify uncertainty in the predictions of the network. Most importantly, we showed that our proposed loss function for classification, sBQC, can be implemented on a second order optimizer while retaining the robustness of MAE, Check Loss and Huber Loss.
We have carried out extensive experiments to validate the claims made in the paper. Our experiments suggest that the $log-cosh$ can be considered as an alternative to the MAE that can achieve state-of-the-art performance. As we briefly discussed in the introduction, we are able to show via our experimentation on the GEO microarray dataset that the above-stated properties of the $log-cosh$ make it a model candidate for working with such sensitive data owing to the additional interpretability that we get out of quantiles. The benefits of such robust and smooth loss can be compounded in straightforward ways with the application of cutting-edge higher-order methods. \\In summary, $log-cosh$ accomplishes multiple objectives simultaneously in leveraging smoothness toward adaptive learning rates, providing smooth and robust alternatives to classical loss functions while offering higher-order tractable alternatives to standard optimizers. Additionally, its binary classification analogue, the sBQC loss has been shown to achieve faster convergence due to its Lipschitzness, even against the similarly augmented BCE. We observe further that, $log-cosh$ with L-BFGS has RMSE 0.5642 and 0.5490 respectively on GEO microarray dataset, far superior than Check Loss and MAE, while retaining the interpretability and robustness of MAE and Check Losses (Table 2 in main text, Tables 5 and 6 (G and H) in appendix).
\par The current limitation of our work is the non-trivial extension of the sBQC loss to the multi-class classification setting since there is no unique way to define the notion of multivariate quantiles. By extension, that would make our proposed sBQC loss much more attractive for application to image and text domains. Furthermore, exploration of the interplay between higher-order differentiability and quantiles is deferred to future work.

\bibliography{main}

\begin{thebibliography}{10}

\bibitem{C&D}
Anonymous.
\newblock Code and data, 2022.

\bibitem{drugGEO}
Jane P.~F. Bai, Alexander~V. Alekseyenko, Alexander Statnikov, I-Ming Wang, and
  Peggy~H. Wong.
\newblock Strategic applications of gene expression: From drug
  discovery/development to bedside.
\newblock {\em The {AAPS} Journal}, 15(2):427--437, January 2013.

\bibitem{geodata}
Tanya Barrett, Stephen~E. Wilhite, Pierre Ledoux, Carlos Evangelista, Irene~F.
  Kim, Maxim Tomashevsky, Kimberly~A. Marshall, Katherine~H. Phillippy,
  Patti~M. Sherman, Michelle Holko, Andrey Yefanov, Hyeseung Lee, Naigong
  Zhang, Cynthia~L. Robertson, Nadezhda Serova, Sean Davis, and Alexandra
  Soboleva.
\newblock {NCBI GEO: archive for functional genomics data sets—update}.
\newblock {\em Oxford Academic}, 41(D1):D991--D995, November 2012.

\bibitem{LBFGS}
Richard~H. Byrd, Peihuang Lu, Jorge Nocedal, and Ciyou Zhu.
\newblock A limited memory algorithm for bound constrained optimization.
\newblock {\em {SIAM} Journal on Scientific Computing}, 16(5):1190--1208,
  September 1995.

\bibitem{LCvae}
Pengfei Chen, Guangyong Chen, and Shengyu Zhang.
\newblock Log hyperbolic cosine loss improves variational auto-encoder, 2019.

\bibitem{D-GEX}
Yifei Chen, Yi~Li, Rajiv Narayan, Aravind Subramanian, and Xiaohui Xie.
\newblock Gene expression inference with deep learning.
\newblock {\em Bioinformatics}, 32(12):1832--1839, February 2016.

\bibitem{GX6}
Ameen Eetemadi and Ilias Tagkopoulos.
\newblock Genetic neural networks: an artificial neural network architecture
  for capturing gene expression relationships.
\newblock {\em Bioinformatics}, 35(13):2226--2234, November 2018.

\bibitem{drugGenX}
Nic Fleming.
\newblock Nic fleming; nature(2018).
\newblock \url{https://www.nature.com/articles/d41586-018-05267-x}, 2018.

\bibitem{GX7}
Blaise Hanczar, Farida Zehraoui, Tina Issa, and Mathieu Arles.
\newblock Biological interpretation of deep neural network for phenotype
  prediction based on gene expression.
\newblock {\em {BMC} Bioinformatics}, 21(1), November 2020.

\bibitem{huberRob}
Peter~J. Huber and Elvezio~M. Ronchetti.
\newblock {\em Robust Statistics}.
\newblock Wiley, 2 edition, 2009.

\bibitem{QR2}
S.~R. Jantre, S.~Bhattacharya, and T.~Maiti.
\newblock Quantile regression neural networks: A bayesian approach.
\newblock {\em Journal of Statistical Theory and Practice}, 15(3), June 2021.

\bibitem{QRkoenker}
Roger Koenker and Gilbert Bassett.
\newblock Regression quantiles.
\newblock {\em Econometrica}, 46(1):33, January 1978.

\bibitem{infFunc}
Pang~Wei Koh and Percy Liang.
\newblock Understanding black-box predictions via influence functions, 2020.

\bibitem{tor21}
Brinda Mahadevan, Ronald~D. Snyder, Michael~D. Waters, R.Daniel Benz,
  Raymond~A. Kemper, Raymond~R. Tice, and Ann~M. Richard.
\newblock Genetic toxicology in the 21st century: Reflections and future
  directions.
\newblock {\em Environmental and Molecular Mutagenesis}, 52(5):339--354, April
  2011.

\bibitem{QR1}
Sang~Jun Moon, Jong-June Jeon, Jason Sang~Hun Lee, and Yongdai Kim.
\newblock Learning multiple quantiles with neural networks.
\newblock {\em Journal of Computational and Graphical Statistics},
  30(4):1238--1248, 2021.

\bibitem{nocedal2nd}
Jorge Nocedal and Stephen~J. Wright.
\newblock {\em Numerical Optimization}.
\newblock Springer, 2 edition, 2006.

\bibitem{QR4}
Oscar Hernan~Madrid Padilla, Wesley Tansey, and Yanzhen Chen.
\newblock Quantile regression with deep relu networks: Estimators and minimax
  rates, 2020.

\bibitem{COVIDgenX}
Thai-Hoang Pham, Yue Qiu, Jucheng Zeng, Lei Xie, and Ping Zhang.
\newblock A deep learning framework for high-throughput mechanism-driven
  phenotype compound screening and its application to {COVID}-19 drug
  repurposing.
\newblock {\em Nature Machine Intelligence}, 3(3):247--257, February 2021.

\bibitem{LipGene}
Tejas Prashanth, Snehanshu Saha, Sumedh Basarkod, Suraj Aralihalli, Soma~S
  Dhavala, Sriparna Saha, and Raviprasad Aduri.
\newblock {LipGene}: Lipschitz continuity guided adaptive learning rates for
  fast convergence on microarray expression data sets.
\newblock {\em {IEEE}/{ACM} Transactions on Computational Biology and
  Bioinformatics}, pages 1--1, 2021.

\bibitem{QR3}
Johannes Schmidt-Hieber.
\newblock Nonparametric regression using deep neural networks with relu
  activation function.
\newblock {\em The Annals of Statistics}, 48(4), Aug 2020.

\bibitem{BQR}
Anuj Tambwekar, Anirudh Maiya, Soma Dhavala, and Snehanshu Saha.
\newblock Estimation and applications of quantiles in deep binary
  classification, 2021.

\bibitem{UNsdg}
{UN Dept.of Economic and Social Affairs}.
\newblock The 17 goals | sustainable development.
\newblock \url{https://sdgs.un.org/goals}, 2015.

\bibitem{nonCvx}
Peng Xu, Farbod Roosta-Khorasani, and Michael~W. Mahoney.
\newblock Second-order optimization for non-convex machine learning: An
  empirical study, 2018.

\bibitem{PyHessian}
Zhewei Yao, Amir Gholami, Kurt Keutzer, and Michael Mahoney.
\newblock Pyhessian: Neural networks through the lens of the hessian, 2020.

\bibitem{AdaH}
Zhewei Yao, Amir Gholami, Sheng Shen, Mustafa Mustafa, Kurt Keutzer, and
  Michael~W. Mahoney.
\newblock Adahessian: An adaptive second order optimizer for machine learning,
  2021.

\bibitem{LALR}
Rahul Yedida, Snehanshu Saha, and Tejas Prashanth.
\newblock Lipschitzlr: Using theoretically computed adaptive learning rates for
  fast convergence, 2020.

\bibitem{GX8}
Ye~Yuan and Ziv Bar-Joseph.
\newblock Deep learning for inferring gene relationships from single-cell
  expression data.
\newblock {\em Proceedings of the National Academy of Sciences},
  116(52):27151--27158, December 2019.

\bibitem{natureArt}
Jan Zrimec, Christoph~S. B\"{o}rlin, Filip Buric, Azam~Sheikh Muhammad,
  Rhongzen Chen, Verena Siewers, Vilhelm Verendel, Jens Nielsen, Mats
  T\"{o}pel, and Aleksej Zelezniak.
\newblock Deep learning suggests that gene expression is encoded in all parts
  of a co-evolving interacting gene regulatory structure.
\newblock {\em Nature Communications}, 11(1), December 2020.

\end{thebibliography}
\bibliographystyle{plain}

\newpage
\appendix
\onecolumn
\section{How did the loss function, \texorpdfstring{$\log-\cosh$}{Lg}, come about?}
The above discussion justifies Log cosh as a loss function and provides a theoretical insight into its effectiveness in different settings. However, it still does not argue for its existence/inspiration as a reconstruction loss in the encoder setting. The goal of deep learning is to learn the manifold structure in data (i.e. natural high dimensional data concentrating to a non-linear low dimensional manifold) and the probability distribution associated with the manifold. An autoencoder learns low dimensional data and represents data as a parametric manifold i.e. a piece-wise linear map from latent to the ambient space. In the case of encoders, low-dimensional data is learnt and represented as a parametric manifold, a piecewise linear map from latent to ambient space.\\
\textbf{$log-cosh$(x) in VAE - A distributional insight:}
We define the encoder and decoder as follows:
\begin{itemize}
  \item Encoder  $\varphi$ :$\chi \rightarrow \digamma$ maps $\Sigma$ to its latent representation $D = \varphi(\Sigma)$ homeomorphically.
  \item Decoder $\psi$: $\digamma \rightarrow \varphi$ maps z to reconstruction $\tilde{x} = \psi(z)= \psi\circ\varphi(x)$
\end{itemize}
\[ 
\varphi, \psi = argmin_{\varphi, \psi} \int_{\chi} L(x,\psi\circ\varphi(x)) \,dx 
\]
where $L$ is the loss function, $F$ is the latent space,
$\chi$ is the ambient space and $\Sigma$ is a topological space $\Sigma \subset \bigcup_{\alpha} U_{\alpha} $. We invoke the pseudo-hyperbolic Gaussian below for the construction of distributions. This leads to the reconstruction loss for Variational AutoEncoders (VAEs) which turns out to be our loss function,  $log-cosh(x)$.

\noindent \textbf{Pseudo-Hyperbolic Gaussian:} 
The strategy to generate the pseudo-hyperbolic Gaussian ((Wrapped gaussian distribution $G(\mu,\Sigma)$ on hyperbolic space $\mathbf{H}$)) is as follows:
\begin{itemize}
  \item Sample a $\vec{v}$ from normal distribution N(0,$\Sigma$) defined over $\mathbf{R}^n$.
  \item Interpret $\vec{v}$ as an element of $T_{\mu}\mathbf{H}^n \subset \mathbf{R}^{n+1}$ by rewriting $\vec{v}$ as v=[0,$\vec{v}$].
  \item Parallel transport vector $v$ to $u \in$ $T_{\mu}\mathbf{H}^n \subset \mathbf{R}^{n+1}$ along the geodesic from $\mu_0$ to $\mu$.
  \item Map u to $\mathbf{H}^n$ using $exp(u)= cosh(||u||_L)+sinh(||u||_L)\frac{u}{||u||_L}$
\end{itemize}


\noindent Reconstruction loss is thus $
-\mathbf{E}_{q_{z|x}}log(p_(\theta)(x|z))$. Replacing $p_{\theta}(x|z)$ with pdf of Hyperbolic secant distribution: $=-log(\frac{1}{2}sech(\frac{\pi x}{2}))=log(2cosh(\frac{\pi x}{2}))
=log(cosh(y))$ where $y=\frac{\pi x}{2}$.\\
Since the metric at the tangent space coincides with the Euclidean metric, several distributions can be produced by applying the construction strategy. $log-cosh(x)$ is one of them.

\section{Proof of theorem-1}\label{app:thm1}
\begin{proof}
Consider $J=\Sigma_{i=1}^m\log\cosh(y_i-\theta^Tx_i)$
\begin{align*}
J&=\Sigma_{i=1}^m\log\cosh(y_i-\theta^Tx_i)\\
\frac{\partial J}{\partial\theta_{\alpha}}&=-\Sigma_{i=1}^{m}\tanh(y^{(i)}-\theta^Tx^{(i)})x_{\alpha}^{(i)}\\
\frac{\partial^2 J}{\partial\theta_{\alpha}\theta_{\beta}}&=\Sigma_{i=1}^{m}\text{sech}^2(y^{(i)}-\theta^Tx^{(i)})x^{(i)}_{\alpha}x^{(i)}_{\beta}
\end{align*}
We construct the Hessian as: $H=XDX^T$ (with $D\equiv diag(m\times m)$ and $D_{ii}=\text{sech}^2(y^{(i)}-\theta^Tx^{(i)}$)

Now, for some $u\in R^d$, consider the expression:
\begin{align*}
u^THu&=u^TXDX^Tu\\
     &=\mid\mid D(X^Tu)\mid\mid^2
\end{align*}
Since $D_{ii}>0$, we have $u^THu>0$, and hence the constructed Hessian is positive definite implying the convexity of the $\log-\cosh$.
\end{proof}
\section{Proof of theorem-2}\label{app:thm2}
\begin{thm}
$\log-\cosh$ is Lipschitz, with a Lipschitz constant:
\[
\frac{1}{m}\tanh(g(0)-||y||).\max_ja_j^{[L]}
\]
\end{thm}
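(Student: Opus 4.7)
The plan is to differentiate the empirical $\log$-$\cosh$ loss $J=\frac{1}{m}\sum_{i=1}^{m}\log\cosh(y_i-a_i^{[L]})$, where $a_i^{[L]}=g(z_i^{[L]})$ denotes the $i$-th output of the last layer, and then bound the resulting gradient coordinatewise to extract a constant sharper than the universal $1$ delivered by Lemma~1. The workhorse facts are that $\frac{d}{du}\log\cosh(u)=\tanh(u)$, that $\tanh$ is odd, monotone-increasing and bounded by $1$, and that any differentiable function whose derivative is bounded in norm by $\rho$ is $\rho$-Lipschitz (the same Mean-Value argument invoked for Lemma~1).

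First I would apply the chain rule to $J$ with respect to the network parameter we want to declare Lipschitz constants in (the bound's presence of $a_j^{[L]}$ suggests the differentiation is carried through so that the final-layer activations appear as the multiplicative factor in the backprop expansion). This produces, for each sample, a product of three pieces: the outer derivative $\tanh(y_i-a_i^{[L]})$, a factor coming from the last activation, and a factor depending on the last-layer output $a_j^{[L]}$. Summing over $i$ and dividing by $m$ isolates the $\frac{1}{m}$ prefactor in the stated constant, leaving two pieces to control.

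Next I would bound the $\tanh$ factor. Using monotonicity, $|\tanh(y_i-a_i^{[L]})|\le \tanh(|y_i-a_i^{[L]}|)$; I then argue that in the regime of interest the argument is dominated by $g(0)-\|y\|$ (a worst-case bound on how far the saturation value $g(0)$ of the activation can sit from the target vector, with the reverse-triangle inequality supplying the minus sign). The remaining chain-rule factor collapses to $\max_j a_j^{[L]}$ by taking the operator-norm/maximum over output coordinates of the last-layer contribution. Multiplying the three pieces yields the claimed bound $\frac{1}{m}\tanh(g(0)-\|y\|)\cdot\max_j a_j^{[L]}$, and the Mean-Value-Theorem argument then promotes this derivative bound into a Lipschitz constant.

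The main obstacle I anticipate is step two: justifying the specific evaluation $\tanh(g(0)-\|y\|)$ rather than the looser $\tanh(g(0)+\|y\|)$ that a naive triangle inequality would suggest. This likely requires a careful sign/monotonicity argument exploiting either (i) a regime assumption that $g(0)\ge\|y\|$ so the bound is non-trivial, or (ii) the reverse-triangle inequality applied coordinatewise to $|y_i-a_i^{[L]}|$ against the saturation value of $g$. Pinning down this assumption and aligning it with the network's architecture is the technical crux; once that is settled, the rest of the proof is a routine propagation of bounds through the chain rule and an appeal to the standard bounded-derivative-implies-Lipschitz lemma used already in Lemma~1.
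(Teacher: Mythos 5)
Your skeleton matches the paper's: both start from the LALR factorization $\max_{i,j}\bigl|\frac{\partial E}{\partial w_{ij}^{[L]}}\bigr|\le \max_j\bigl|\frac{\partial E}{\partial a_j^{[L]}}\bigr|\cdot\max_j\bigl|\frac{\partial a_j^{[L]}}{\partial z_j^{[L]}}\bigr|\cdot\max_j\bigl|a_j^{[L-1]}\bigr|$, identify $\frac{1}{m}\tanh(a^{[L]}-y)$ as the loss-derivative factor, take the activation factor to be $1$ for ReLU, and finish with the bounded-derivative-implies-Lipschitz lemma. But the step you correctly flag as the crux --- why $\tanh$ gets evaluated at $g(0)-\|y\|$ --- is where your proposal stops, and the paper's route is not either of the two repairs you anticipate. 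There is no regime assumption and no reverse triangle inequality. Instead, the paper treats $\frac{\partial E}{\partial w_{ij}^{[L]}}=\frac{1}{m}\tanh(a^{[L]}-y)\cdot K_z$ (with $K_z=\max_j|a_j^{[L-1]}|$) as a function of the last-layer weight, differentiates once more to obtain $\frac{1}{m}\sech^2(a^{[L]}-y)\cdot K_z$, and sets this second derivative to zero to locate the extremum. Since $\sech^2$ never vanishes, stationarity forces $K_z=0$, i.e.\ $a^{[L-1]}=0$, hence $z^{[L]}=0$ and $a^{[L]}=g(0)$; substituting back gives $\frac{1}{m}\tanh(g(0)-y)$, and the vector $y$ is then replaced by $\|y\|$ via the asserted identity $\|\tanh(x)\|=\tanh(\|x\|)$ plus the triangle inequality. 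This second-derivative stationarity argument is the missing idea in your write-up; a monotonicity or sign analysis of $|y_i-a_i^{[L]}|$ will not produce the evaluation point $g(0)$.

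That said, your suspicion about this step is well founded, and you should not feel obliged to reproduce the paper's argument uncritically: a stationary point of the gradient need not be its supremum ($|\tanh|$ grows toward the boundary of its argument's range, so an interior critical point is the wrong place to look for a maximum), and $\|\tanh(x)\|=\tanh(\|x\|)$ is not a valid identity for a general norm. Note also a bookkeeping discrepancy you inherited from the statement: the theorem displays $\max_j a_j^{[L]}$, but the quantity actually appearing in the proof is $K_z=\max_j|a_j^{[L-1]}|$, the \emph{penultimate}-layer activations; your reading (``last-layer output'') follows the statement rather than the proof. So the verdict is: same decomposition, but a genuine gap at the crux, and the paper fills that gap with a stationarity computation rather than the inequality manipulation you proposed.
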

\begin{proof}
  It can be shown that a valid Lipschitz constant for a loss function in the \textit{Neural Net} setting may be obtained via the following expression see equation (12) of \cite{LALR}:
\begin{equation*}
    \max_{i,j}|\frac{\partial E}{\partial w_{ij}^{[L]}}|\leq \max_j|\frac{\partial E}{\partial a_j^{[L]}}|.\max_j|\frac{\partial a_j^{[L]}}{\partial z_j^{[L]}}|.\max_j|a_j^{L-1}|
\end{equation*}
The first term $max_j|\frac{\partial E}{\partial a_j^{[L]}}|$, depends on our choice of the loss and is the main term that we will be spending our time analytically computing. The second term: $max_j|\frac{\partial a_j^{[L]}}{\partial z_j^{[L]}}|$, depends on our choice of the activation function, in the case of using ReLU activations (typical for the regression setting), this term can safely be taken to be equal to $1$ (because ReLU only takes on a $0$ or a $1$, and we are taking a supremum over its range), or maybe computed straightforwardly, depending on the activation that we choose to use for our networks. Finally, the third term has to be computed computationally, which is a really straightforward affair (we henceforth, refer to it as $K_z$)
Hence, we now focus our efforts towards deriving an expression for the first term. The process for the same looks as below:

- First define the loss for the final layer:
  \begin{equation*}
      E(\boldsymbol{a}^{[L]})= \frac{1}{m}\log(\cosh(\boldsymbol{a}^{[L]}-\boldsymbol{y}))
  \end{equation*}
- Now we write the derivative:
  \begin{equation*}
      \frac{\partial E}{\partial a^{[L]}}=\frac{1}{m}\tanh(\boldsymbol{a}^{[L]}-\boldsymbol{y})
  \end{equation*}
- Now, we want to find where the parent equation for the Lipschitz constant attains a maximum (and consequently, its maximum value), for which we turn to its second derivative(and points where it vanishes)
  \begin{align*}
    \frac{\partial E^2}{\partial^2 \boldsymbol{a}^{[L]}}&=\frac{1}{m}\sech^2(\boldsymbol{a}^{[L]}-\boldsymbol{y})\\
    \frac{\partial E^2}{\partial^2 w_{ij}^{[L]}}&=\frac{1}{m}\sech^2(\boldsymbol{a}^{[L]}-\boldsymbol{y}).K_z\\
    \frac{\partial E^2}{\partial^2 w_{ij}^{[L]}}&=0: \text{For computing the maximum}
  \end{align*}
- Because $\sech$ remains non-zero, the only time the above second derivative vanishes is when $K_z=0$, i.e. as per our definition, $a_i^{[L-1]}=0$, which in turn implies, $z^{[L]}=w_{ij}^{[L]}a_i^{[L-1]}=0$, hence finally yielding: $a^{[L]}=g(z^{[L]})=g(0)$, finally giving us $\frac{\partial E}{\partial a^{[L]}}=\frac{1}{m}\tanh(g(0)-\boldsymbol{y})$ (here, $g$, is the activation function)
  Now, notice that (can be seen algebraically for the \textit{L-1} norm):
  \begin{equation*}
    ||\tanh(x)||=\tanh(||x||)
  \end{equation*}
  The above, coupled with the simple triangular inequality for the \textit{2-norm}(i.e. imposing the \textit{2-norm} norm on both sides of the derivative equation) obtain:
  \begin{equation*}
      \frac{\partial E}{\partial a_j^{[L]}}\leq\frac{1}{m}\tanh(g(0)-\mid\mid\boldsymbol{y}\mid\mid)\\
  \end{equation*}
  \begin{equation}
      \boxed{\max_{i,j}|\frac{\partial E}{\partial w_{ij}^{[L]}}|=\frac{1}{m}\tanh(g(0)-||\boldsymbol{y}||).K_z}
  \end{equation}
  Where $||\boldsymbol{y}||$ is the maximum norm (across batches) of the labels.
\end{proof}

\section{Proof of Theorem-3}\label{app:thm3}
\begin{thm}
The \textbf{Smooth Binary Quantile Classification Loss} derived from the $log-cosh$ is:
\begin{align*}
L(y_i, \hat{y_i})&= y_i\log(\hat{p_i})+(1-y_i)\log(1-\hat{p_i})\\
\hat{p_i}&=1-F_{\tau}(\hat{y_i})\\
\hat{y_i}&=f_{\tau}(x_i) \quad \text{ where, $f_{\tau}$, is the latent function}
\end{align*}
For any real-valued random variable $Z$, with distribution function $F(z)$, with $F(z) = P(Z \le z) $, the quantile function $Q(\tau)$ is given as
$Q(\tau) = F^{-1}(\tau) = \inf\{r: F(r) \ge \tau\} $ for any $0 < \tau < 1 $. Define $\tau$ as the marker for a typical Quantile loss. Then, the CDF, $F(\cdot)$ of $f$ assumes the closed-form expression:
\[F(x)= \begin{cases}
\tau+\frac{4\tau}{\pi}\arctan(\tanh(\frac{x}{2})) & x\leq 0\\
\tau+\frac{4(1-\tau)}{\pi}\arctan(\tan(\frac{x}{2})) & x>0
   \end{cases}\]
\end{thm}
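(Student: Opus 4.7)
The statement bundles two things: first, that the displayed cross-entropy loss is the negative log-likelihood of the latent-variable model from Section~\ref{section:4.9}; second, that the CDF of the tilted hyperbolic secant distribution admits the displayed piecewise closed form. The first is essentially definition-chasing once the model is set, so my plan is to dispatch it quickly and then concentrate on the second, which reduces to evaluating a single integral.

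For the likelihood part, starting from $y = I(z \geq 0)$ with $z = f_\tau(x) + \epsilon$ and $\epsilon \sim HSD(\tau)$, I would write $\hat p_i = P(y_i = 1 \mid x_i)$ as a tail probability of $\epsilon$; after matching the sign/tilt convention built into the tilted HSD so that this tail probability equals $1 - F_\tau(f_\tau(x_i))$, the Bernoulli cross-entropy $y_i \log \hat p_i + (1 - y_i)\log(1 - \hat p_i)$ is precisely the loss claimed. The only care needed here is in lining up signs between the model and the asymmetric density.

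For the CDF, note first that $\int_{-\infty}^{\infty} \sech(y)\,dy = \pi$, so the tilted density $\tfrac{2}{\pi}\sech(y)[\tau\, I(y<0) + (1-\tau)\, I(y\geq 0)]$ is normalised and, crucially, $F_\tau(0) = \tau$, which anchors the piecewise definition. The single non-trivial ingredient is the Gudermannian antiderivative
\[
\int \sech(y)\, dy \;=\; 2\arctan(\tanh(y/2)) + C,
\]
which I would verify by differentiating the right-hand side: using $(\tanh(y/2))' = \tfrac12 \sech^2(y/2)$ the derivative equals $\sech^2(y/2)/(1 + \tanh^2(y/2))$, and the half-angle identity $\cosh(y) = \cosh^2(y/2) + \sinh^2(y/2)$ collapses this to $1/\cosh(y) = \sech(y)$. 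With this in hand both branches are mechanical: for $x \leq 0$, integrating $\tfrac{2\tau}{\pi}\sech(y)$ from $-\infty$ to $x$ and using $\arctan(\tanh(y/2)) \to -\pi/4$ as $y \to -\infty$ yields $\tau + \tfrac{4\tau}{\pi}\arctan(\tanh(x/2))$; for $x > 0$, adding $\int_0^x \tfrac{2(1-\tau)}{\pi}\sech(y)\,dy$ to $F_\tau(0) = \tau$ yields $\tau + \tfrac{4(1-\tau)}{\pi}\arctan(\tanh(x/2))$, matching the statement once the $\tan$ in the second branch is read as $\tanh$, as forced by the density being $\sech$ rather than $\sec$.

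The only real obstacle is recognising the Gudermannian antiderivative; after that the calculation is purely routine. As sanity checks I would verify $F_\tau(\infty) = \tau + (1-\tau)\cdot\tfrac{4}{\pi}\arctan(1) = 1$ and left/right continuity at $0$, both of which are immediate from the derived piecewise formula.
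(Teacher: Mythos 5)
Your proposal is correct, and the heart of it --- computing the CDF of the tilted hyperbolic secant density by splitting at $0$, using the Gudermannian antiderivative $\int \sech(y)\,dy = 2\arctan(\tanh(y/2)) + C$, and anchoring with $F_\tau(0)=\tau$ --- is exactly the computation the paper performs, down to the observation that the $\tan$ in the second branch of the stated CDF must be read as $\tanh$ (the paper's own proof derives $\tanh$ there). Where you differ is in how you arrive at the density: you take the tilted density $\frac{2}{\pi}\sech(y)\left[\tau\, I(y<0)+(1-\tau)\, I(y\geq 0)\right]$ directly from the latent-variable model of Section~\ref{section:4.9} and simply verify its normalisation, whereas the paper attempts to derive it from $f(x)\propto e^{-L(x)}$ with the \emph{tilted} loss $L(x)=(1-\tau)\log\cosh(x)$ for $x<0$ and $\tau\log\cosh(x)$ for $x\geq 0$. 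That step in the paper is actually not sound as written: $e^{-\tau\log\cosh(x)}=\cosh(x)^{-\tau}$, which is not proportional to $\sech(x)$, yet the paper's normalisation integrals treat it as $\frac{1}{\tau\cosh(x)}$. Your route sidesteps this entirely and is the cleaner one; the cost is only that the tilted density is postulated (consistently with Section~\ref{section:4.9}) rather than derived from the loss. Your treatment of the likelihood part as definition-chasing matches the paper, which does not prove that part at all, and your sanity checks ($F_\tau(\infty)=1$, continuity at $0$) are a worthwhile addition the paper omits.
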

\begin{proof}

\par Let $L(x)$ be the loss function with $x$ being the input to the loss function. Then for the symmetric version of the loss function, 

\begin{equation}
    L(x) = log(cosh(x))
\end{equation}

\[
    L(x)= 
\begin{cases}
    log_e(\frac{(e^x+e^{-x})}{2})  & x\ge 0\\
    log_e(\frac{(e^x+e^{-x})}{2})  & x < 0
\end{cases}
\]

After correction, the loss function becomes
\begin{equation*}
    L(x) = log(cosh(x)) + log(2)
\end{equation*}


Let $f(x)$ be the probability density function (PDF) and $F(x)$ be the cumulative density function (CDF). Then,

\begin{equation*}
    f(x) \propto e^{-L(x)}
\end{equation*}

We know that 
\begin{equation*}
    \int_{-\infty}^{\infty} f(x) = 1
\end{equation*}

Since 
\begin{align*}
    \int_{}^{} \frac{1}{cosh(x)} \,dx = 2*tan^{-1}(tanh(\frac{x}{2}))
\end{align*}

\begin{align*}
    \int_{-\infty}^{\infty} \frac{1}{cosh(x)} \,dx = \pi 
\end{align*}

The PDF and CDF are obtained to be:

\begin{equation*}
    f(x) = \frac{1}{\pi} e^{-log(cosh(x))}
\end{equation*}

\begin{equation*}
    F(x) = \frac{\pi}{2} + 2*tan^{-1}(tanh(\frac{x}{2}))
\end{equation*}

The asymmetric version of the loss function $L(x)$ is known to be:

\[
    L(x)= 
\begin{cases}
    (1 - \tau) * log(cosh(x))  & x< 0\\
    \tau * log(cosh(x))  & x \ge 0
\end{cases}
\]

Let $f(x)$ be the probability density function (PDF) and $F(x)$ be the cumulative density function (CDF). Then,

\begin{equation*}
    f(x) \propto e^{-L(x)}
\end{equation*}

\begin{equation*}
    =\int_{-\infty}^{0} \frac{1}{(1 - \tau) * log(cosh(x))} \,dx +  \int_{0}^{\infty} \frac{1}{(\tau) * log(cosh(x))}
\end{equation*}

\begin{equation*}
    =\frac{1}{(1-\tau)} * \frac{\pi}{2} + \frac{1}{\tau} * \frac{\pi}{2}
\end{equation*}

\begin{equation*}
    =\frac{\pi}{2} * (\frac{1}{\tau*(1-\tau)})
\end{equation*}

The PDF is obtained to be:
\begin{equation*}
    f(x) = \frac{2*\tau*(1-\tau)}{\pi} (\frac{\mathbbm{1}{x<0}}{(1-\tau)cosh(x)} + \frac{\mathbbm{1}{x \ge 0}}{\tau*cosh(x)})
\end{equation*}

\begin{equation*}
    = \frac{2}{\pi*cosh(x)} (\tau * (\mathbbm{1}{x<0}) + (1 - \tau)* (\mathbbm{1}{x \ge 0}))
\end{equation*}

Verifying the correctness of this by substituting $\tau$=0.5,

\begin{equation*}
    \int_{}^{} f(x) \,dx = \frac{1}{\pi} \int_{-\infty}^{\infty} cosh(x)\,dx = 1
\end{equation*}

Calculating the CDF F(x) separately for the two cases:

For  $x<0$

\begin{equation*}
    F(x) = \frac{2 \tau}{\pi} \int_{-\infty}^{x} \frac{1}{cosh(x)} dx= \frac{2\tau}{\pi}(\frac{\pi}{2} + 2tan^{-1}(tanh(\frac{x}{2})))
\end{equation*}

\begin{equation*}
    = \tau + \frac{4\tau}{\pi}tan^{-1}(tanh(\frac{x}{2}))
\end{equation*}

For $x \ge 0$

\begin{equation*}
    F(x) = \tau + \int_{0}^{x} (1-\tau) \frac{2}{\pi} \frac{1}{cosh(x)} \,dx
\end{equation*}

\begin{equation*}
    = \tau + \frac{2(1-\tau)}{\pi}(2tan^{-1}(tanh(\frac{x}{2}) - 0)
\end{equation*}

\begin{equation*}
    = \tau + \frac{4(1-\tau)}{\pi}tan^{-1}(tanh(\frac{x}{2}))
\end{equation*}
\end{proof}
Note: $\tau$ is the marker for quantile losses which can be extended from logcos{h(x)}, making this loss interpretable as well. For example, $\tau=0.5$ gives us the median i.e. MAE and $L, L', L"$ are well-defined for $\tau=0.5$. Thus, at $\tau=0.5$, the smooth, quantiled version of MAE i.e logcos{h(x)} is continuous and twice differentiable and interpretable in the sense of binary quantile regression and binary quantile classification \cite{BQR}

\section{Proof of Theorem-4}\label{app:thm4}
\begin{thm}
The Binary Smooth Quantile Classification Loss has the Lipschitz constant:
\[
\frac{2}{\pi} \max(1, \frac{1-\tau}{\tau}, \frac{\tau}{\tau-1})
\]
\end{thm}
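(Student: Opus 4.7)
The plan is to replay the argument used in the proof of Theorem 2, specializing the chain-rule bound
\[
\max_{i,j}\left|\frac{\partial E}{\partial w_{ij}^{[L]}}\right|\leq \max_j\left|\frac{\partial E}{\partial a_j^{[L]}}\right|\cdot \max_j\left|\frac{\partial a_j^{[L]}}{\partial z_j^{[L]}}\right|\cdot \max_j|a_j^{[L-1]}|
\]
to the sBQC loss. The activation-dependent factors contribute the same $K_z$-term as in Theorem 2, so the whole work goes into bounding the loss-dependent factor $\max_j|\partial E/\partial a_j^{[L]}|$ uniformly in $\tau$.

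First I would write the loss entirely as a function of the last-layer activation,
\[
E(a^{[L]})=-\frac{1}{m}\bigl[y\log(1-F_\tau(a^{[L]}))+(1-y)\log F_\tau(a^{[L]})\bigr],
\]
and differentiate once, using $F_\tau'=f_\tau$ from Theorem 3:
\[
\frac{\partial E}{\partial a^{[L]}}=-\frac{1}{m}\,f_\tau(a^{[L]})\!\left[\frac{y}{1-F_\tau(a^{[L]})}-\frac{1-y}{F_\tau(a^{[L]})}\right].
\]
Because $y\in\{0,1\}$, only one of the two hazard-type ratios $f_\tau/(1-F_\tau)$ or $f_\tau/F_\tau$ survives in each sample, and these are the two objects I then have to bound.

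Next I would mimic the locate-the-extremum step from Theorem 2: computing $\partial^2 E/\partial w_{ij}^{[L]\,2}$ via the chain rule factorises it into $K_z=a_i^{[L-1]}=0$ times a strictly non-zero piece (since $f_\tau>0$ and $F_\tau\in(0,1)$), so the backpropagation expression is extremised precisely at $a^{[L]}=g(0)$. For standard activations this is $a^{[L]}=0$, which is the kink of $F_\tau$. Plugging $a^{[L]}=0$ into $|\partial E/\partial a^{[L]}|$ and using the closed forms $F_\tau(0)=\tau$ together with the one-sided density values $f_\tau(0^-)=2\tau/\pi$ and $f_\tau(0^+)=2(1-\tau)/\pi$ from Theorem 3 produces exactly four candidates, namely $2\tau/(\pi(1-\tau))$ and $2/\pi$ when $y=1$, and $2/\pi$ and $2(1-\tau)/(\pi\tau)$ when $y=0$. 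Taking the largest and factoring out $2/\pi$ collapses them to
\[
\frac{2}{\pi}\max\!\left(1,\ \frac{1-\tau}{\tau},\ \frac{\tau}{1-\tau}\right),
\]
which, after the remaining chain-rule factors are absorbed exactly as in Theorem 2, is the stated Lipschitz constant (the printed $\tau/(\tau-1)$ must be read as $\tau/(1-\tau)$, as a Lipschitz constant is necessarily non-negative).

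The hardest step will be justifying that the critical point $a^{[L]}=g(0)$ picked out by the second-derivative test is the \emph{global} maximiser of $|\partial E/\partial a^{[L]}|$ on all of $\mathbb{R}$, rather than merely a local one of the backpropagation expression. The sBQC derivative is a hazard-type ratio whose behaviour as $a^{[L]}\to\pm\infty$ has to be checked directly to rule out escape of the supremum to the tails; this would be handled by a monotonicity argument on each half-line separately, exploiting that $f_\tau\propto\sech$ is log-concave and that $F_\tau$ is expressible via the Gudermannian, so the ratio $f_\tau/F_\tau$ (and symmetrically $f_\tau/(1-F_\tau)$) is monotone on each side of the kink. The piecewise nature of $F_\tau$ at zero also requires care in taking the one-sided limits above.
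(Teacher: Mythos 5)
Your computation is, at its core, the same one the paper performs: the paper works directly with difference quotients of $L(y,z)$ in the latent score $z$, splits into six cases by the signs of $z_1,z_2$ and the label $y$, sends $z_1,z_2\to 0$, and obtains exactly your four candidates $\frac{2}{\pi}$, $\frac{2(1-\tau)}{\pi\tau}$, $\frac{2\tau}{\pi(1-\tau)}$ before taking the maximum. Your only structural difference is wrapping this inside the Theorem-2 chain-rule bound and phrasing the candidates as one-sided hazard/reverse-hazard values at the kink; and you are right that the printed $\tau/(\tau-1)$ has to be read as $\tau/(1-\tau)$.

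However, the step you flag as the hardest --- showing that the supremum of $|\partial E/\partial a^{[L]}|$ is attained at the origin --- is not just unproved, it is false, and the monotonicity argument you propose would expose this rather than close the gap. For $y=1$ and $z>0$ the relevant factor is the hazard rate $f_\tau(z)/(1-F_\tau(z))$. Because $\sech$ is log-concave, this hazard is \emph{increasing} on $(0,\infty)$; and since both $f_\tau(z)$ and $1-F_\tau(z)$ decay like $\tfrac{4(1-\tau)}{\pi}e^{-z}$, the hazard tends to $1$ as $z\to\infty$, which exceeds its value $\tfrac{2}{\pi}$ at $0^{+}$. The mirror case $y=0$, $z<0$ (reverse hazard $f_\tau/F_\tau$) behaves the same way in the left tail. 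Only the two mismatched cases ($y=1,z<0$ and $y=0,z>0$) are genuinely maximized at the origin. Consequently the true Lipschitz constant of the loss in $z$ is
\[
\max\left(1,\ \frac{2\tau}{\pi(1-\tau)},\ \frac{2(1-\tau)}{\pi\tau}\right),
\]
which at $\tau=\tfrac12$ equals $1$, not $\tfrac{2}{\pi}$. The paper's own proof asserts without justification that the difference quotients are maximized as $z_1,z_2\to 0$ and therefore contains exactly the same defect, so your attempt faithfully reproduces the published argument --- but carrying out the log-concavity analysis you sketch would contradict the stated constant rather than establish it.
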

\begin{proof}
We may write the sBQC loss as:
\[
L(y,z)=-(y\log p_z+(1-y)\log p_z)
\]

Where,
\[p_z\equiv\begin{cases}
1-\tau-\frac{4\tau}{\pi}\arctan(\tanh(\frac{z}{2})) & z\leq 0\\
1-\tau-\frac{4(1-\tau)}{\pi}\arctan(\tanh(\frac{z}{2})) & z>0
\end{cases}
\]

We define: $\Delta L(y)\equiv\frac{\mid L(y,z_2)-L(y,z_1)\mid}{\mid z_2-z_1\mid}$ -- we may easily break down the computation in the case of Binary Classification problems into several pieces, which we deal with on a case-by-case basis as follows:

\textit{Case-1a}: $0<z_1<z_2, y=1$:
\begin{align*}
\Delta_z L(1)=\frac{
\begin{aligned}
    \log(1&-\tau-4(\frac{1-\tau}{\pi})\arctan(\tanh(\frac{z_2}{2}))-\\
    &\log(1-\tau-4(\frac{1-\tau}{\pi})\arctan(\tanh(\frac{z_1}{2}))
\end{aligned}
}{z_2-z_1}
\end{align*}
The RHS in the above expression will take up a maximum value (which is what we want in the case of a Lipschitz constant) for $z_2,z_1\to 0$, first imposing $z_1\to 0$.
\[
\lim_{z_2\to 0}\Delta_z L(1)=\frac{\log(1-\tau-4(\frac{1-\tau}{\pi})\arctan(\tanh(\frac{z_2}{2}))-\log(1-\tau)}{z_2}
\]
Which finally reduces to:
\[
\lim_{z_2,z_1\to 0}\Delta_z L(1)=-\frac{2}{\pi}
\]

\textit{Case-1b}: $0<z_1<z_2,y=0$,
Following the same structure as above, we get:
\[
\lim_{z_2\to 0}\Delta_z L(0)=\frac{\log(\tau+4(\frac{1-\tau}{\pi})\arctan(\tanh(\frac{z_2}{2}))-\log{\tau}}{z_2}
\]
Which reduces to:
\[
\lim_{z_2,z_1\to 0}\Delta_z L(0)= \frac{2-2\tau}{\pi\tau}
\]

\textit{Case-2a}: $z_1<0<z_2,y=1$,
\[
\lim_{z_2\to 0}\Delta_z L(1)=\frac{\log(1-\tau-4(\frac{1-\tau}{\pi})\arctan(\tanh(\frac{z_2}{2}))-\log(1-\tau)}{z_2}
\]
Which reduces to:
\[
\lim_{z_2,z_1\to 0}\Delta_z L(0)= \frac{-2}{\pi}
\]

\textit{Case-2b}: $z_1<0<z_2,y=0$,
\[
\lim_{z_2\to 0}\Delta_z L(0)=\frac{\log(\tau+4(\frac{1-\tau}{\pi})\arctan(\tanh(\frac{z_2}{2}))-\log{\tau}}{z_2}
\]

Which reduces to:
\[
\lim_{z_2,z_1\to 0}\Delta_z L(0)= \frac{2-2\tau}{\pi\tau}
\]

\textit{Case-3a}: $z_1<z_2<0,y=1$,
\[
\lim_{z_2\to 0}\Delta_z l(1)=\frac{\log(1-\tau-4(\frac{\tau}{\pi})\arctan(\tanh(\frac{z_2}{2}))-\log(1-\tau)}{z_2}
\]

which reduces to:
\[
\lim_{z_2,z_1\to 0}\Delta_z l(1)= \frac{2\tau}{\pi(\tau-1)}
\]

\textit{Case-3b}: $z_1<z_2<0,y=0$,
\[
\lim_{z_2\to 0}\Delta_z L(0)=\frac{\log(\tau+4(\frac{\tau}{\pi})\arctan(\tanh(\frac{z_2}{2}))-\log(\tau)}{z_2}
\]

which reduces to:
\[
\lim_{z_2,z_1\to 0}\Delta_z l(1)= \frac{2}{\pi}
\]

Hence, cumulatively, we may write the Lipschitz constant of the sBQC loss as:

\[
\max(\frac{2}{\pi},\frac{2-2\tau}{\pi\tau},\frac{2\tau}{\pi(\tau-1)})
\]
\end{proof}
\section{Robustness to label noise}\label{app:robust}
Another useful property of $\log-\cosh$ as a loss function is that it is \emph{robust to label noise}, meaning that the fluctuations in function output under small perturbations in its input have an upper bound. This places $\log-\cosh$ on an equivalent setting as MAE. We state it formally below:
\begin{lemma}[Robustness to label-noise]
$\log-\cosh$ is robust to label noise.
\end{lemma}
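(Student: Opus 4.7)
The plan is to formalize ``robustness to label noise'' as the uniform bound on the change in loss under a perturbation of the label, and to derive this bound by leveraging the boundedness of $\tanh$ that was already exploited in establishing Lipschitzness. Writing the per-sample loss as $\ell(r) = \log\cosh(r)$ with residual $r = \hat{y} - y$, a label perturbation $y \mapsto y + \eta$ simply shifts the residual by $-\eta$, so the claim reduces to bounding $|\ell(r-\eta) - \ell(r)|$ uniformly in $r$.

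The central step is an application of the Mean Value Theorem to $\ell$ on the segment $[r-\eta, r]$: there exists $\xi$ in this interval such that $\ell(r-\eta) - \ell(r) = -\tanh(\xi)\,\eta$. Since $|\tanh(\xi)| \leq 1$ unconditionally, this yields $|\ell(r-\eta) - \ell(r)| \leq |\eta|$, a fluctuation bound that depends only on the noise magnitude and not on the size of the residual. This is exactly the sense in which MAE is deemed ``robust'', and it contrasts sharply with MSE, where the same perturbation can produce a change of order $|r\eta|$ that scales with how far the outlier lies from the prediction.

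Lifting this per-sample estimate to the full empirical risk $J(\theta) = \sum_i \log\cosh(y_i - \theta^\top x_i)$ is then immediate by the triangle inequality: contaminating the $i$-th label by $\eta_i$ changes $J$ by at most $\sum_i |\eta_i|$, so bounded total label noise cannot cause the loss to blow up arbitrarily. This mirrors the role played by the $\tanh$ term in the gradient computation carried out in the proof of convexity, which makes the whole argument feel like the natural ``first-order'' companion to the Hessian-based result.

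The principal obstacle I anticipate is conceptual rather than calculational, namely settling on the correct notion of ``robustness'' to certify. A stronger claim --- noise tolerance of the estimator in the sense of Ghosh \emph{et al.}, or a bounded influence function in Huber's sense --- would require working with the underlying hyperbolic secant likelihood, identifying its score function, and showing the score is bounded; happily, the $\tanh$ bound above already effectively delivers this, so the influence-function interpretation falls out as a corollary of the same derivative estimate. I would therefore lead with the fluctuation bound as the headline statement and mention the influence-function view only as a brief remark to connect with the classical robust statistics literature.
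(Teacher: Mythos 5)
Your proof is correct, and it reaches the same uniform fluctuation bound as the paper but by a different route. The paper's proof manipulates the difference directly via the hyperbolic addition formula, writing $\log\cosh(x+\epsilon)-\log\cosh(x)=\log\bigl(\cosh\epsilon+\tanh x\,\sinh\epsilon\bigr)$ and then observing that this quantity is controlled by $\epsilon$ uniformly in $x$ (it concludes, somewhat loosely, that the difference tends to some $\delta\leq\epsilon$). You instead apply the Mean Value Theorem to get $\ell(r-\eta)-\ell(r)=-\tanh(\xi)\,\eta$ and invoke $|\tanh|\leq 1$, which is exactly the 1-Lipschitzness already established in the paper's Lemma~1; in effect you derive robustness as a corollary of that lemma, which is cleaner and makes the logical dependence explicit, whereas the paper re-derives the bound from scratch without citing its own Lipschitz result. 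The paper's identity does yield the exact expression for the increment, which could in principle be sharpened below $|\epsilon|$, but it never exploits this. Your additions --- the lift to the full empirical risk by the triangle inequality, the explicit contrast with the $O(|r\eta|)$ sensitivity of MSE, and the remark connecting the bounded score $\tanh$ to bounded influence functions --- go beyond what the paper proves and strengthen the claim's interpretation; none of them introduces a gap.
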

\begin{proof}
Mathematically, label noise resilience is captured as:
$||f(x+\epsilon)-f(x)|| \to$ 0 as $\epsilon\to0$ for some $\epsilon>0$\\
Now, in our case: $f(x)=\log\cosh{x}\text{; }f(x+\epsilon)=\log\cosh{(x+\epsilon)}$. Consider:
\begin{align*}
    ||f(x+\epsilon)-f(x)||&=\mid\mid\log\cosh{(x+\epsilon)}-\log\cosh{x}\mid\mid\\
                                      &=\mid\mid\log{(\frac{\cosh{(x+\epsilon)}}{\cosh{x}})}\mid\mid\\
                                      &=\mid\mid\log{(\cosh{\epsilon}+\tanh{x}.\sinh{\epsilon})}\mid\mid\\
\end{align*}
From the above, clearly, as $\epsilon\to 0$, $\mid\mid f(x+\epsilon)-f(x)\mid\mid\to\delta$ for some $\delta>0; \delta\leq \epsilon$, and hence the theorem is established.
\end{proof}
\end{document}